\definecolor{pink3}{HTML}{fc0557}
\crefname{section}{\S}{\S\S}
\Crefname{section}{\S}{\S\S}
\definecolor{shadecolor}{gray}{0.9}
\definecolor{mylightgray}{gray}{0.94}
\pgfplotsset{compat=1.6}
\tikzstyle{every picture}+=[font=\sffamily]
\tikzstyle{optimized} = [circle,fill=white,draw=black, dashed,inner sep=1pt, minimum size=20pt, font=\fontsize{10}{10}\selectfont, node distance=1]
\pgfplotsset{
	tick label style = {font=\sffamily},
	every axis label/.append style={font=\sffamily},
	typeset ticklabels with strut,
}
\pgfplotsset{every axis/.append style={
			every x tick label/.append style={font=\fontsize{6pt}{6pt}\sffamily, yshift=.5ex,},
			every y tick label/.append style={font=\fontsize{6pt}{6pt}\sffamily, xshift=.5ex},
			every y label/.append style={xshift=10ex, font=\sffamily},
			every x label/.append style={yshift=3ex, font=\sffamily},
			every title/.append style={font=\sffamily}
		},
}
\pgfplotsset{
	xticklabel={$\mathsf{\pgfmathprintnumber{\tick}}$},
	yticklabel={$\mathsf{\pgfmathprintnumber{\tick}}$},
}
\pgfplotsset{every axis title/.append style={yshift=-1ex}}
\renewcommand{\todo}[2][]{\tikzexternaldisable\@todo[#1]{#2}\tikzexternalenable}
\newacronym{SGD}{\textsc{sgd}}{stochastic gradient descent}
\newacronym{MAP}{\textsc{map}}{maximum-a-posteriori}
\newacronym{MLE}{\textsc{mle}}{maximum likelihood estimation}
\newacronym{MNLL}{\textsc{mnll}}{mean negative log-likelihood}
\newacronym{NLL}{\textsc{nll}}{negative log-likelihood}
\newacronym{LL}{\textsc{ll}}{log-likelihood}
\newacronym{RMSE}{\textsc{rmse}}{root mean square error}
\newacronym{ECE}{\textsc{ece}}{expected calibration error}
\newacronym{SNR}{\textsc{snr}}{signal-to-noise ratio}
\newacronym{FID}{\textsc{fid}}{Fr\'echet Inception Distance}
\newacronym{BPD}{\textsc{bpd}}{bit per dimension}
\newacronym{NFE}{\textsc{nfe}}{neural function evaluations}
\newacronym{AE}{\textsc{ae}}{autoencoder}
\newacronym{WAE}{\textsc{wae}}{Wasserstein Autoencoder}
\newacronym{VAE}{\textsc{vae}}{Variational Autoencoder}
\newacronym{BAE}{\textsc{bae}}{Bayesian autoencoder}
\newacronym{CDF}{\textsc{cdf}}{cumulative density function}
\newacronym{GAN}{\textsc{gan}}{Generative Adversarial Network}
\newacronym{DPGMM}{\textsc{dpgmm}}{Dirichlet process Gaussian mixture model}
\newacronym{MC}{mc}{Monte Carlo}
\newacronym{SDE}{\textsc{sde}}{Stochastic Differential Equation}
\newacronym{CNF}{cnf}{Continuous Normaxlizing Flow}
\newacronym{ODE}{ode}{Ordinary Differential Equation}
\newacronym{MCMC}{\textsc{mcmc}}{Markov chain Monte Carlo}
\newacronym{HMC}{\textsc{hmc}}{Hamiltonian Monte Carlo}
\newacronym{MH}{mh}{Metropolis-Hastings}
\newacronym{NUTS}{nuts}{no-u-turn sampler}
\newacronym{SGHMC}{\textsc{sghmc}}{stochastic gradient Hamiltonian Monte Carlo}
\newacronym{OU}{\textsc{ou}}{Ornstein-Uhlenbeck}
\newacronym[longplural=deep Gaussian processes]{DGP}{\textsc{dgp}}{deep Gaussian process} %
\newacronym{GPLVM}{gplvm}{Gaussian process latent variable model}
\newacronym{DPMM}{dpmm}{Dirichlet Process Mixture Model}
\newacronym{LLM}{LLM}{large language model}
\newacronym{VFE}{vfe}{variational free energy}
\newacronym[longplural=Gaussian Processes]{GP}{\textsc{gp}}{Gaussian Process}
\newacronym{VI}{\textsc{vi}}{variational inference}
\newacronym{SVI}{\textsc{svi}}{stochastic variational inference}
\newacronym{ELBO}{\textsc{elbo}}{evidence lower bound}
\newacronym{NELBO}{\textsc{nelbo}}{negative evidence lower bound}
\newacronym{ELL}{\textsc{ell}}{expected log likelihood}
\newacronym{KL}{\textsc{kl}}{Kullback-Leibler}
\newacronym{AUC}{auc}{area under the curve}
\newacronym{BNN}{\textsc{bnn}}{Bayesian neural network}
\newacronym{DNN}{\textsc{dnn}}{deep neural network}
\newacronym{CNN}{\textsc{cnn}}{convolutional neural network}
\newacronym{MLP}{\textsc{mlp}}{multilayer perceptron}
\newacronym{NN}{nn}{neural network}
\newacronym{RELU}{ReLU}{rectified linear unit}
\newacronym{NF}{nf}{normalizing flow}
\newacronym{RBF}{rbf}{radial basis function}
\newacronym{ARD}{ard}{automatic relevance determination}
\newacronym{RKHS}{rkhs}{reproducing kernel Hilbert space}
\newacronym{OT}{ot}{optimal transport}
\newacronym{WD}{wd}{Wasserstein distance}
\newacronym{SWD}{swd}{sliced-Wasserstein distance}
\newacronym{DSWD}{dswd}{distributional sliced-Wasserstein distance}
\newacronym{LAP}{\textsc{lap}}{linear assignment problem}
\newacronym{SOLAP}{\textsc{solap}}{sum of bilinear assignment problems}
\newacronym{ICL}{ICL}{in-context learning}
\newacronym{LoRA}{LoRA}{Low-Rank Adaptation}
\newcommand{\mathbold}[1]{{\boldsymbol{{#1}}}}
\newcommand{\g}{\,|\,}
\newcommand{\nestedmathbold}[1]{{\mathbold{#1}}}
\newcommand{\mba}{\nestedmathbold{a}}
\newcommand{\mbe}{\nestedmathbold{e}}
\newcommand{\mbf}{\nestedmathbold{f}}
\newcommand{\mbw}{\nestedmathbold{w}}
\newcommand{\mbx}{\nestedmathbold{x}}
\newcommand{\mby}{\nestedmathbold{y}}
\newcommand{\mbz}{\nestedmathbold{z}}
\newcommand{\mbA}{\nestedmathbold{A}}
\newcommand{\mbE}{\nestedmathbold{E}}
\newcommand{\mbI}{\nestedmathbold{I}}
\newcommand{\mbW}{\nestedmathbold{W}}
\newcommand{\mbX}{\nestedmathbold{X}}
\newcommand{\mbtheta}{\nestedmathbold{\theta}}
\newcommand{\mbzero}{\nestedmathbold{0}}
\DeclarePairedDelimiterX{\infdivx}[2]{[}{]}{%
#1\;\delimsize\|\;#2%
}
\DeclareMathOperator*{\argmin}{arg\,min}
\newcommand{\eqdef}{\stackrel{{\rm def}}{=}}
\newcommand{\cD}{\mathcal{D}}
\newcommand{\cL}{\mathcal{L}}
\newcommand{\cN}{\mathcal{N}}
\newcommand{\cU}{\mathcal{U}}
\newcommand{\bbR}{\mathbb{R}}
\newcommand{\bbE}{\mathbb{E}}
\newcommand{\bbP}{\mathbb{P}}
\renewcommand*{\backrefalt}[4]{%
\ifcase #1 %
No citations.%
\or
(p. #2)%
\else
(pp. #2)%
\fi
}
\DeclareMathAlphabet{\mathmybb}{U}{bbold}{m}{n}
\newcommand{\1}{\mathmybb{1}}
\title[From Predictions to Confidence Intervals]{%
    From predictions to confidence intervals: an empirical study of conformal prediction methods for in-context learning
}
\author{
  \Name{Zhe Huang}\thanks{Equal contribution}\thanks{Work done during an internship at Stellantis.}\\
  \addr PSL Research University, France
  \AND
  \Name{Simone Rossi}$^{*}$\\
  \addr EURECOM, France
  \AND
  \Name{Rui Yuan}\\
  \addr Stellantis, France
  \AND
  \Name{Thomas Hannagan}\\
  \addr Stellantis, France
}
\begin{document}

\maketitle

\begin{abstract}

Transformers have become a standard architecture in machine learning, demonstrating strong in-context learning (ICL) abilities that allow them to learn from the prompt at inference time. However, uncertainty quantification for ICL remains an open challenge, particularly in noisy regression tasks. This paper investigates whether ICL can be leveraged for distribution-free uncertainty estimation, proposing a method based on conformal prediction to construct prediction intervals with guaranteed coverage. While traditional conformal methods are computationally expensive due to repeated model fitting, we exploit ICL to efficiently generate confidence intervals in a single forward pass. Our empirical analysis compares this approach against ridge regression-based conformal methods, showing that conformal prediction with in-context learning (\emph{CP with ICL}) achieves robust and scalable uncertainty estimates. Additionally, we evaluate its performance under distribution shifts and establish scaling laws to guide model training. These findings bridge ICL and conformal prediction, providing a theoretically grounded and new framework for uncertainty quantification in transformer-based models.
\looseness=-1

\end{abstract}

\section{Introduction}

Transformers  have become the standard building block in many machine learning domains, including natural language processing, computer vision, and reinforcement learning \citep{Vaswani2017}.
Being expressive and readily parallelizable, these models achieve state-of-the-art performance on a wide range of tasks and are today the most widespread form of \glspl{LLM}.
Remarkably, transformers have also been shown to have good \emph{in-context} abilities, i.e., they can follow examples given in the prompt to generate text that is consistent while seemingly learning from the prompt itself.
This phenomenon, known as \gls{ICL} \citep{Brown2020}, has been tentatively explained in terms of meta-learning \citep{Finn2017,Min2022}, optimization \citep{Oswald2023} and Bayesian reasoning \citep{Falck2024,Ye2024}.
In particular, \citet{Oswald2023} demonstrated that self-attention layers can approximate gradient descent with a given learning rate \citep{Rossi2024} by constructing task-specific updates to token representations.
\looseness=-1

In this paper, we explore the opportunities offered by in-context learning in relation to uncertainty quantification.
We propose a novel method to estimate the uncertainty of transformers by leveraging their in-context learning capabilities.
In particular, we attempt to answer the following research questions:
\begin{quote}
    \emph{Can we use the in-context learning capabilities of transformers to obtain robust and scalable uncertainty quantification in a noisy linear regression task?} \emph{How does our method compare to existing uncertainty quantification methods?}
\end{quote}

For the first question, we propose a method based on conformal prediction \citep{Vovk2005,Shafer2008}, a framework for constructing prediction intervals that are guaranteed to have a certain coverage probability.
Conformal prediction exhibits good theoretical properties and, being distribution-free, it does not require any assumptions on the distribution of data or model parameters.
However, in its original form, conformal prediction is computationally greedy because it requires fitting a model multiple times with different data before confidence intervals can be produced.
By exploiting in-context learning in transformers, we show that this computational issue can be avoided, leading to robust and scalable uncertainty quantification.
We then leverage recent advances in mechanistic interpretations of in-context learning \citep{Oswald2023} to build exact oracle conformal predictors, which we can compare to the performance of the proposed method. This allows us to assess the quality of our uncertainty estimates.
\looseness=-1

\paragraph*{Related work.}
One of the motivations for scalable uncertainty quantification is to apply uncertainty quantification to LLMs. Uncertainty quantification in LLMs is an active area of research \citep[see, e.g.,][]{Yin2023,Shorinwa2024}.
One possible approach to achieve it is to leverage the Bayesian inference machinery and recent advances in Bayesian deep learning to explicitly model the uncertainty of the model parameters \citep{Neal1994,Blundell2015,Gal2016b,Papamarkou2024}.
For example, \citet{Yang2023} propose a method to estimate the uncertainty of \glspl{LLM} by combining the \gls{LoRA} methodology with a Bayesian approach based on the Laplace approximation \citep{MacKay1998,Kristiadi2020a}.
Alternatively, one can employ post-hoc calibration methods to estimate the uncertainty of \glspl{LLM} \citep{Guo2017,Lin2024}.
Closest to our work is the method by \citet{Ling2024}, which proposes an uncertainty decomposition method for \glspl{LLM} for in-context learning.
In contrast, our approach leverages the \gls{ICL} to perform uncertainty quantification through conformal prediction.
Additionally, we focus on a simplified setting (noisy linear regression with linear self-attention) to provide a clear comparison with exact methods.
\looseness=-1

\section{In-context learning and conformal prediction} \label{sec:method}

In this paper we study the combination of distribution-free conformal prediction with in-context learning capabilities of transformer-based models.
In this section, we start by providing a brief overview of the in-context learning setting and then we focus on the conformal prediction framework, which we will use to quantify the uncertainty of the predictions made by the model.

\paragraph{Notation.}
We follow a standard notation for vectors ($\mba$), matrices ($\mbA$), and scalars ($a$).
Given a vector $\mba$, we denote its $i$-th element as $\mba_{[i]}$.
Given a matrix $\mbA$, we denote its $i$-th row and $j$-th column as $\mbA_{[i,j]}$.
\looseness=-1

\subsection{Learning tasks in-context with transformer models}

We are interested in analyzing the in-context learning setting for regression problems \citep{Garg2022}.
The context for a task $\tau$ is a set of $n$ input features $\mbx_i \in \mathbb{R}^d$ and the corresponding noisy labels $y_i = f^{(\tau)}(\mbx_i) + \epsilon_i$, where $f^{(\tau)}: \mathbb{R}^d \to \mathbb{R}$ is the true function and $\varepsilon_i$ is the noise term.
We denote this context as $\cD_n^{(\tau)} = \{(\mbx_1, y_1), \dots, (\mbx_n, y_n)\}$.
In the \gls{ICL} setting, given this context and a new input $\mbx_{n+1}$, we aim to predict the corresponding label $y_{n+1}$ with relatively good accuracy.
Unlike classic learning methods, for which given task $\tau$ we need to train on the context $\cD_n^{(\tau)}$ to estimate the function $f^{(\tau)}$ and make predictions, in \gls{ICL}, learning and inference happen as a single forward pass through a pre-trained model.
As we will see in a bit, this becomes a critical component of our methodology.

\paragraph{Linear transformer models.}
In this paper, we restrict our focus to transformer models \citep{Vaswani2017} with linear-self attention layers \citep{Oswald2023}.
Linear self-attention layers are a special case of self-attention layers where the attention weights are computed as a linear combination of the input features.
Specifically, $\mbf_\text{LSA}(\mbtheta, \cdot): \mathbb{R}^{(d+1)\times(n+1)} \to \mathbb{R}^{(d+1)\times(n+1)}$ is a linear self-attention layer parameterized by $\mbtheta=\{\mbW_K, \mbW_Q, \mbW_V, \mbW_O\}$, where $\mbW_K, \mbW_Q, \mbW_V, \mbW_O\in \mathbb{R}^{(d+1)\times (d+1)}$ are the key, query, value, and output weight matrices, respectively.
\looseness=-1
Given a sequence of tokens $\mbE = [\mbe_1, \ldots, \mbe_{n+1}] \in \mathbb{R}^{(d+1)\times(n+1)}$, the output of the linear self-attention layer is computed as
\begin{equation}
  \label{eq:lsa}
  \mbf_\text{LSA}(\mbtheta, \mbE) = \mbE +  \mbW_O \mbW_V \mbE \left( \frac{(\mbW_K \mbE)^\top \mbW_Q \mbE}{\sqrt{d}} \right)
\end{equation}

This parameterization differs from the standard transformer model, where the attention weights are computed as a non-linear function (e.g., softmax) of the dot-product between the query and key vectors.
The linear self-attention layer has been shown to have good in-context learning capabilities \citep{Garg2022} and it exhibits a simple and interpretable structure that allows us to analyze the in-context learning process in detail \citep{Oswald2023}.
With a slight abuse of notation, we will also use $\mbf_\text{LSA}(\mbtheta, \cdot)$ to denote multiple layers of linear self-attention, where the output of the previous layer is used as input for the next layer.
\looseness=-1

\paragraph{Tokenization and pre-training.}
Following the standard practice described of \citet{Oswald2023, Akyurek2023}, we tokenize the context $\cD_n^{(\tau)}$ with the following format:
\begin{equation}
  \label{eq:tokenization}
  \mbE(\cD_n^{(\tau)}, \mbx_{n+1}^{(\tau)}, z) =
  \begin{bmatrix}
    \mbx_1^{(\tau)} & \mbx_2^{(\tau)} & \ldots & \mbx_n^{(\tau)} & \mbx_{n+1}^{(\tau)} \\
    y_1^{(\tau)}    & y_2^{(\tau)}    & \ldots & y_n^{(\tau)}    & z
  \end{bmatrix} \in \mathbb{R}^{(d+1)\times(n+1)}.
\end{equation}
where $\mbx_i^{(\tau)}$ and $y_i^{(\tau)}$ are the input features and labels for task $\tau$.
Note that we also include the new input $\mbx_{n+1}^{(\tau)}$ in the tokenization, but we mask ($z=0$) the corresponding label as the quantity we aim to predict.
Indeed, the \gls{ICL} objective is to predict the label $y_{n+1}^{(\tau)}$ given the context $\{(\mbx_i^{(\tau)}, y_i^{(\tau)})\}_{i=1}^n$ and the new input $\mbx_{n+1}^{(\tau)}$.
To achieve this, we need to pre-train a model using sequences of the form \eqref{eq:tokenization} for a diverse set of tasks $\tau$.
These sequences are randomly generated using the following procedure:
\begin{equation}
  \label{eq:data-generation}
  f^{(\tau)} \sim p(f), \quad \mbx_i^{(\tau)} \sim p(\mbx), \quad \varepsilon_i \sim p(\varepsilon)
\end{equation}
where $p(f)$, $p(\mbx)$, and $p(\varepsilon)$ are the distributions of the true functions, input features, and noise terms, respectively.
For the moment, we focus on linear regression tasks, where $f^{(\tau)}(\mbx) = \mbx^\top\mbw^{(\tau)}$.
Consequently, we define the true function $f^{(\tau)}$ by sampling the weight vector $\mbw^{(\tau)}$ from a given $p(\mbw)$.
In practice, we define $p(\mbx) = \mathcal{U}(-a, a)^d$, $p(\varepsilon) = \cN(0, \sigma_n^2)$, and $p(\mbw) = \cN(\mbzero, \sigma^2\mbI)$, where $a$, $\sigma_n^2$, and $\sigma^2$ are all given hyper-parameters.
During pre-training, we optimize the model parameters $\mbtheta$ to minimize the mean squared error between the $n+1$-th predicted label and the true label $y_{n+1}^{(\tau)}$:
\begin{equation}
  \label{eq:pre-train-loss}
  \mathcal{L}_\text{pre-train}(\mbtheta) =
  \bbE_{\mbw^{(\tau)}, \{\mbx_i^{(\tau)}, \varepsilon_i^{(\tau)}\}_{i=1}^{n+1}}
  \left(\mbf_\text{LSA}(\mbtheta, \mbE^{(\tau)})_{[d+1, n+1]} - y_{n+1}^{(\tau)}\right)^2.
\end{equation}
where $\mbE^{(\tau)} = \mbE(\cD_n^{(\tau)}, \mbx_{n+1}^{(\tau)}, 0)$ is the tokenized sequence for task $\tau$.
In practice, we optimize the model parameters $\mbtheta$ via \cref{eq:pre-train-loss} by approximating the expectation with a batch of task sequences sampled from \cref{eq:data-generation} and using stochastic gradient descent \citep{Kingma2015}.
The whole pre-training process is illustrated in \cref{fig:method}.

\subsection{A brief overview of conformal prediction}

The general goal of \emph{conformal prediction} is to have a model based on observed $\cD_n$ such that given a new feature vector $\mbx_{n+1}\in \bbR^d$, we can build a $100(1-\alpha)\%$ confidence interval for the corresponding label $y_{n+1}$, where $\alpha \in (0, 1)$ is a user-defined significance level.
The conformal prediction set for $y_{n+1}$ is defined as the set of values $z\in\bbR$ such that the label $y_{n+1}$ belongs to the set with probability at least $1-\alpha$; in the literature, this is also referred to as the \emph{typical set} \citep{MacKay03}.
For the classic linear regression task, the typicalness of $z$ can be defined based on the residuals of a linear model trained on the augmented dataset $\cD_{n+1}(z) = \cD_n \cup \{(\mbx_{n+1}, z)\}$.
Under few assumptions, like exchangeability of the residuals, the conformal prediction set built in this way has good theoretical properties (e.g.\ the marginal coverage guarantee in \cref{thm:coverage}, \cref{sec:coverage_guarantee}).
Despite these theoretical guarantees, the exact computation of the conformal prediction set is often infeasible in practice, since it requires to train a model for infinitely many augmented datasets $\cD_{n+1}(z)$, for each $x_{n+1}$ in the test/validation set.
To overcome this issue, several methods have been proposed in the literature \citep{Chen2017, Chen2016, Fong2021}.
Here, we will show how to use in-context learning to build conformal prediction sets in a computationally efficient way.
\looseness=-1

For the classic conformal prediction we can consider the setup of regularized empirical risk minimization as follows:
\begin{equation}
  \label{eq:optimization-loss}
  \widehat{\mbw} = \argmin_{\mbw \in \bbR^d} \cL(\mbw) \eqdef \sum_{i=1}^n \ell(y_i, f(\mbw, \mbx_i)) + \lambda\norm{\mbw}_2^2,
\end{equation}
where $\cL(\mbw)$ is the empirical risk, $\ell(y, \hat{y})$ is the loss function, $\lambda$ is the regularization coefficient, and $f(\mbw, \mbx)$ is the linear model $f(\mbw, \mbx) = \mbw^\top \mbx$.
While various loss functions can be considered, in this work we focus on the squared loss $\ell(y, \hat{y}) = (y - \hat{y})^2$.

For a new feature vector $\mbx_{n+1}$ and a given confidence level $\alpha\in(0,1)$, our goal is to build a conformal prediction set $\Gamma_\alpha(\mbx_{n+1})$ such that
\begin{equation}
  \label{eq:prediction-set}
  \bbP(y_{n+1} \in \Gamma_\alpha(\mbx_{n+1}) \mid \mbx_{n+1}) \geq 1 - \alpha.
\end{equation}
To build the conformal prediction set, we need to define an augmented dataset $\cD_{n+1}(z) = \cD_n \cup \{(\mbx_{n+1}, z)\}$ and a new optimization problem:
\begin{equation}
  \label{eq:optimization-loss-augmented}
  \widehat{\mbw}(z) = \argmin_{\mbw \in \bbR^d} \cL(\mbw, z) \eqdef \sum_{i=1}^n \ell(y_i, f(\mbw, \mbx_i)) + \ell(z, f(\mbw, \mbx_{n+1})) + \lambda\norm{\mbw}_2^2,
\end{equation}
where $\cL(\mbw, \mbz)$ is the augmented empirical risk.
Now, for any $z\in\bbR$, we define a conformity score for $\cD_{n+1}(z)$ as
\looseness=-1
\begin{equation}
  \label{eq:conformity-score}
  \begin{aligned}
    \widehat{R}_i(z)     & = \phi(y_i, f(\widehat{\mbw}(z), \mbx_i)), \quad i=\{1,\dots,n\}, \\
    \widehat{R}_{n+1}(z) & = \phi(z, f(\widehat{\mbw}(z), \mbx_{n+1})),
  \end{aligned}
\end{equation}
where $\phi(\cdot, \cdot)$ is a permutation-invariant scoring function.
In regression tasks, a common choice for $\phi$ is the absolute residual, i.e., $\phi(y, \hat{y}) = |y - \hat{y}|$.

We can define the \emph{typicalness} of $z$ as
\begin{equation}
  \label{eq:typicalness}
  \widehat{\pi}(z) = 1 - \frac{1}{n+1} \rank(\widehat{R}_{n+1}(z)),
\end{equation}
where $\rank(\widehat{R}_{n+1}(z))$ is defined as $\sum_{i=1}^{n+1} \1(\widehat{R}_i(z) \leq \widehat{R}_{n+1}(z))$.

If the conformity scores $\widehat{R}_i(z)$ are exchangeable and identically distributed, then the conformal prediction set with the desired guarantee in \cref{eq:prediction-set} can be defined as
\begin{equation}
  \label{eq:prediction-set-2}
  \Gamma_\alpha(\mbx_{n+1}) = \{z \in \bbR \mid \widehat{\pi}(z) \geq \alpha\}.
\end{equation}
The set $\Gamma_\alpha(\mbx_{n+1})$ collects all the values $z$ such that $\widehat{\pi}(z)$ is greater than or equal to $\alpha$, which means that $\widehat{R}_{n+1}(z)$ is ranked lower than $(n+1)(1-\alpha)$ among all the conformity scores $\widehat{R}_i(z)$ for $i=\{1,\dots,n\}$.
This result is formally stated and proved in \citet{Vovk2005} and reported in \cref{sec:primer_conformal_prediction,sec:coverage_guarantee}.
Generally, computing such a set is infeasible, since it requires to train a model for infinitely many augmented datasets $\cD_{n+1}(z)$, in order to compute the conformity scores $\widehat{R}_i(z)=\absolutevalue{y_i -  f(\widehat{\mbw}(z), \mbx_i)}$ for all $i=\{1,\dots,n\}$.
In practice, one can approximate the computation of the conformity scores by training a model on a finite set of augmented datasets $\cD_{n+1}(z)$, for a finite set of $z$ values \citep{Angelopoulos2021}.
While this approach is computationally feasible, it requires to solve the optimization problem \eqref{eq:optimization-loss-augmented} for each $z$ value, which can be computationally expensive.

\begin{figure}
  \centering
  \subfigure{
    \centering
    \includegraphics[width=.38\textwidth]{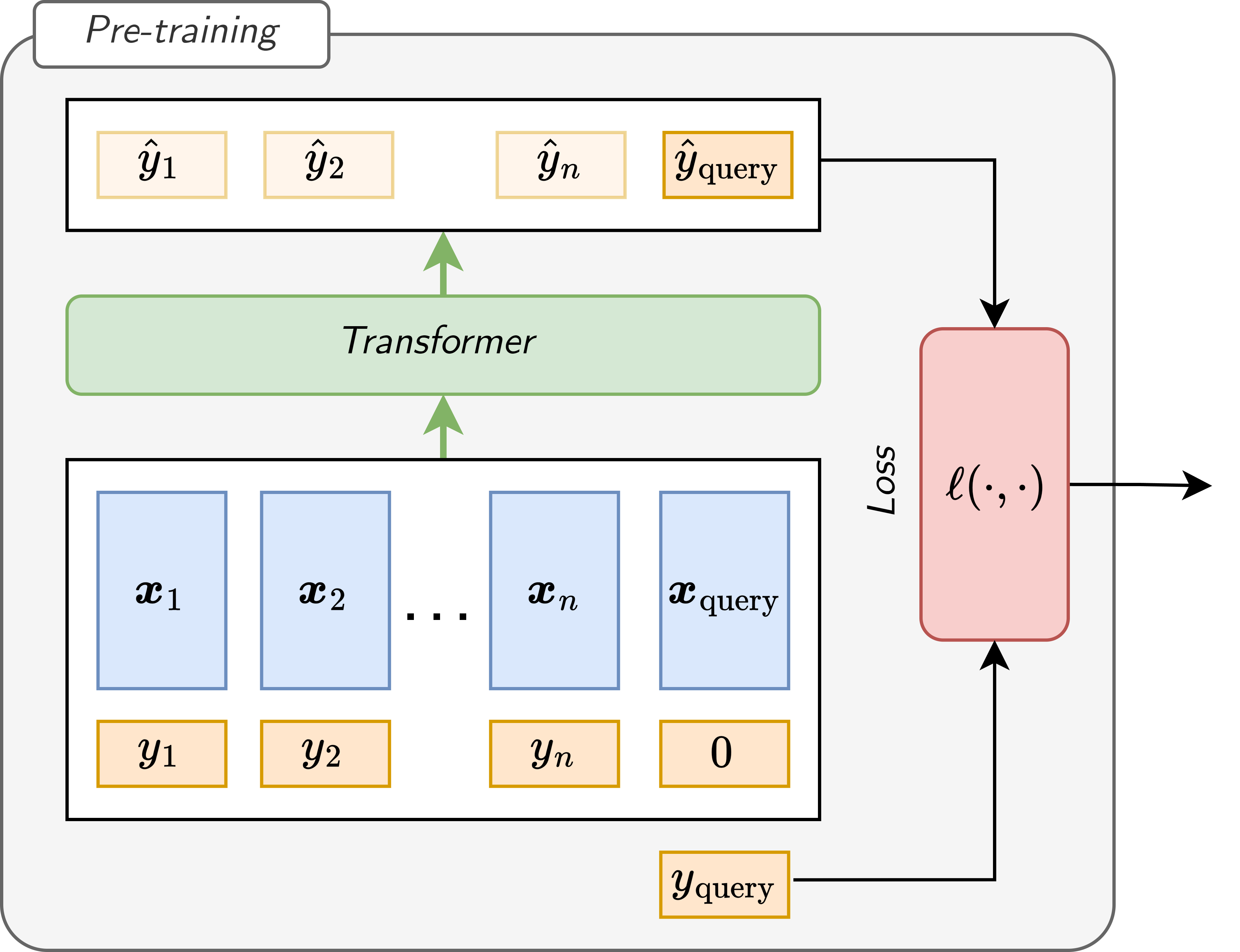}
    \label{fig:pre-train}
  }%
  \subfigure{
    \centering
    \includegraphics[width=.62\textwidth]{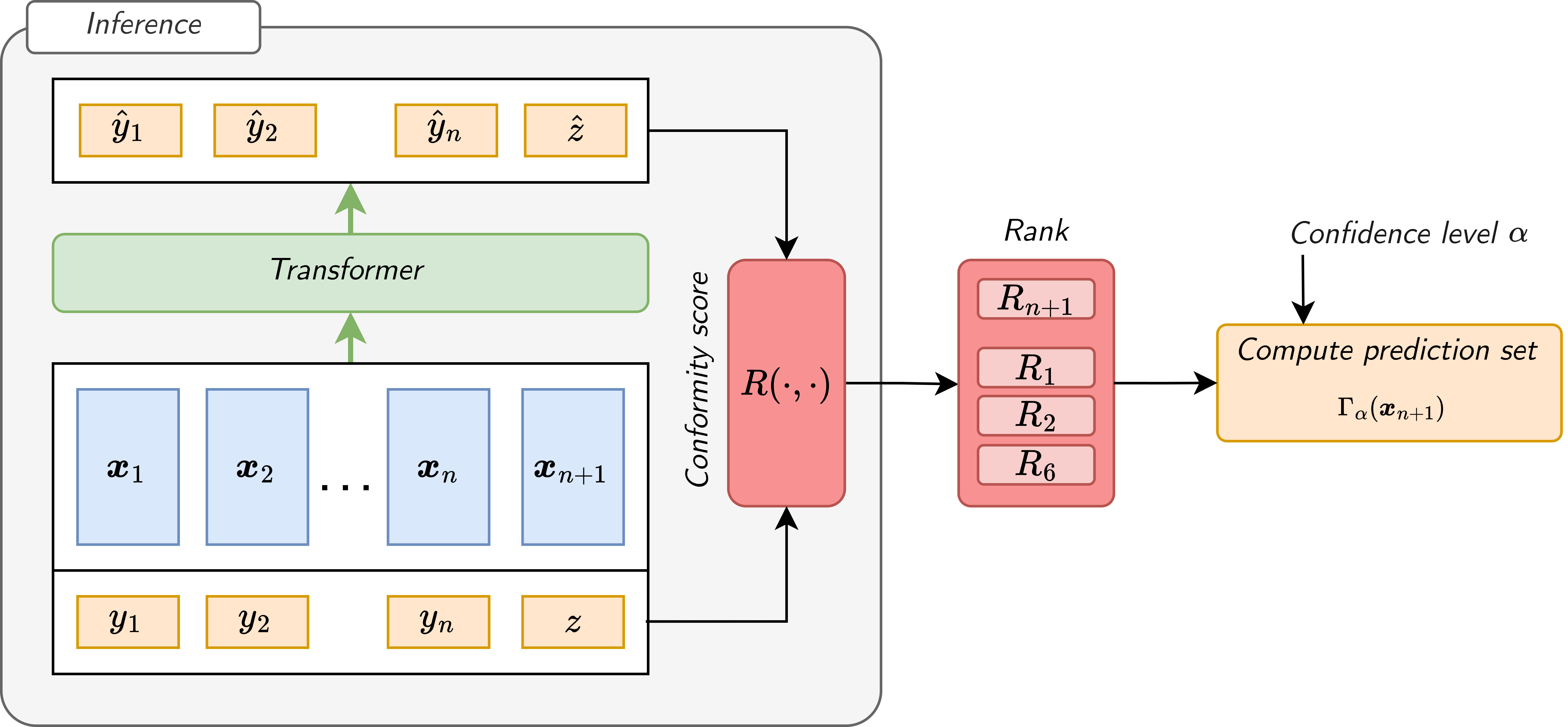}
    \label{fig:inference}
  }
  \caption{
    \textbf{In-context learning with conformal prediction.} The pre-training process consists of generating task sequences (\cref{eq:tokenization}) and optimizing the model parameters $\mbtheta$ to minimize the mean squared error between the predicted and true labels (\cref{eq:pre-train-loss}). During inference, we use the pre-trained model to predict the label $y_{n+1}^{(\tau)}$ and compute the prediction interval using conformal prediction.
  }
  \label{fig:method}
\end{figure}

\subsection{Building the bridge between in-context learning and conformal prediction}

Now we show how to use the in-context learning capabilities of transformer models to build conformal prediction sets in a computationally efficient way.
The key idea is to use the pre-trained model to predict the label $y_{n+1}$ and compute the conformity scores $\widehat{R}_i(z)$ for $i=\{1,\dots,n\}$ and $\widehat{R}_{n+1}(z)$ for a finite set of $z$ values.
This is supported by recent theoretical results on the convergence properties of transformer models and \gls{ICL}.
Indeed, \citet{Oswald2023} showed that, in case of noiseless targets, the linear self-attention layers can be seen as gradient descent steps on a loss function equivalent to \cref{eq:optimization-loss}, with $\lambda=0$.
Conversely, in the presence of noise, the transformer model has shown to converge to the Bayes-optimal solution for the linear regression task, with $\lambda=\sigma^2/\sigma_n^2$ \citep{Garg2022}.
In practice, this means that given a pre-training the model as described in \cref{eq:pre-train-loss}, for a given dataset $\cD_n$ and a new input $\mbx_{n+1}$, the prediction from the transformer model for the label $\widehat y_{n+1} = \mbf_\text{LSA}(\mbtheta, \mbE(\cD_{n}, \mbx_{n+1}, 0))_{[d+1, n+1]}$ converges to the Bayes-optimal solution for the linear regression task $\widehat y_{n+1} = \widehat{\mbw}^\top \mbx_{n+1}$, where $\widehat{\mbw}$ is the solution of \cref{eq:optimization-loss} with $\lambda=\sigma^2/\sigma_n^2$.
This is a key property that we want to exploit in our methodology.

First of all, we observe that $\widehat\mbw(z)$ in \cref{eq:optimization-loss-augmented} is only needed to compute the conformity scores $\widehat{R}_i(z)$ and $\widehat{R}_{n+1}(z)$ in \cref{eq:conformity-score} via the predictions of the linear model $f(\widehat{\mbw}(z), \mbx_i)$ and $f(\widehat{\mbw}(z), \mbx_{n+1})$.
Given the convergence properties of the transformer model, we can approximate the conformity scores by using the predictions of the pre-trained model.
In particular, we can define the conformity scores as
\begin{equation}
  \label{eq:conformity-score-approx}
  \begin{aligned}
    \widehat{R}_i(z)     & = \absolutevalue{y_i - \widehat\mby(z)_{[i]}}, \quad i=\{1,\dots,n\}, \\
    \widehat{R}_{n+1}(z) & = \absolutevalue{z - \widehat\mby(z)_{[n+1]}},
  \end{aligned}
\end{equation}
where $\widehat\mby(z) = \mbf_\text{LSA}(\mbtheta, \mbE_{n+1}(z))_{[ d+1,:]}$ is the vector of predicted labels for the input $\mbE_{n+1}(z) = \mbE(\cD_n, \mbx_{n+1}, z)$ with the context $\cD_{n}$ augmented by $\{\mbx_{n+1}, z\}$.
During pre-training, the model learns to predict $y_{n+1}$ from the masked token. At inference, different values of $z$ are used in the conformal prediction method to form confidence intervals. Since the transformer is trained to predict $y_{n+1}$, the same forward pass also produces predictions for any $y_i$. As a result, the model can predict the label $z$ for any given $z$, and also all the labels $y_i$ in $\cD_n$, which is exactly what we need to compute the conformity scores.

The conformity scores from \cref{eq:conformity-score-approx} are now based on the in-context predictions of the pre-trained model, rather than on the residuals from a linear model trained on the expanded dataset $\cD_{n+1}(z)$. This allows computation of the conformity scores for a finite set of $z$ values with a single forward pass of the transformer, instead of solving the optimization problem \cref{eq:optimization-loss-augmented} for each $z$. With the scores $\widehat{R}_i(z)$ and $\widehat{R}_{n+1}(z)$, it is possible to evaluate the typicalness of $z$ as defined in \cref{eq:typicalness} and construct the conformal prediction set $\Gamma_\alpha(\mbx_{n+1})$. This process is shown in \cref{fig:inference} and described in \cref{algo:conformal-prediction-icl}.
\cref{sec:generalized_conformity_scores} provides a theoretical justification for the use of the in-context predictions by extending the coverage guarantee of the conformal prediction framework to the case of \gls{ICL}.

\begin{algorithm2e}[t]
  \caption{Conformal prediction with in-context learning}
  \label{algo:conformal-prediction-icl}
  \footnotesize
  \KwIn{Pre-trained model $\mbf_\text{LSA}(\mbtheta, \cdot)$, context $\cD_n$, test inputs $\mbX_{\mathrm{new}}$, confidence level $\alpha$, grid of $z$ values $\mathcal{Z}$}
  \KwOut{Conformal prediction set $\Gamma_\alpha(\mbx_{n+1})$ for each $\mbx_{n+1} \in \mbX_{\mathrm{new}}$}

  \For {$\mbx_{n+1} \in \mbX_{\mathrm{new}}$}{
  \For{$z \in \mathcal{Z}$}{
  $\mbE_{n+1}(z) = \mbE(\cD_n, \mbx_{n+1}, z)$ \tcp*{Tokenize the augmented dataset}
  $\widehat\mby(z) = \mbf_\text{LSA}(\mbtheta, \mbE_{n+1}(z))_{[ d+1,:]}$ \tcp*{Predict the labels}
  $\widehat{R}_i(z) = \absolutevalue{y_i - \widehat\mby(z)_{[i]}}$ for $i=\{1,\dots,n\}$ \tcp*{Compute conformity scores}
  $\widehat{R}_{n+1}(z) = \absolutevalue{z - \widehat\mby(z)_{[n+1]}}$ \\
  $\widehat{\pi}(z) = 1 - \frac{1}{n+1} \rank(\widehat{R}_{n+1}(z))$ \tcp*{Compute typicalness}
  }
  $\Gamma_\alpha(\mbx_{n+1}) = \{z \in \mathcal{Z} \mid \widehat{\pi}(z) \geq \alpha\}$ \tcp*{Build the conformal prediction set}
  }
\end{algorithm2e}

\paragraph{Practical considerations.}
A couple of practical considerations are in order.
First, we need to choose a finite set of $z$ values to compute the conformity scores.
In our case, we consider a grid of $z$ values $\mathcal{Z}$, chosen based on the range of the labels in the training set.
In particular we consider a $K$-sized grid s.t. $z\in[y_\text{min} - 0.25\Delta, y_\text{max} + 0.25\Delta]$, where $y_\text{min}$ and $y_\text{max}$ are the minimum and maximum labels in the training set, and $\Delta = {y_\text{max} - y_\text{min}}$.
This choice allows us to cover the range of the labels in the training set and to have a good resolution for the conformity scores (for more details, see Remark 5 in \citet{Lei2017} and \cref{sec:grid_range}).
Second, we observe that the two for-loops in \cref{algo:conformal-prediction-icl} can be parallelized, since the computation of the conformity scores for different $z$ values and different input features $\mbx_{n+1}$ are independent.
This allows us to compute the conformal prediction set for multiple input features in parallel, which can be useful in practice to speed up the computation.
In our experiments, we use JAX's vectorization to parallelize the computation of the conformity scores and the typicalness for multiple input features, but other parallelization strategies (e.g.\ explicitly building a batch with all combination of $z$ and $\mbx_{n+1}$) can be used as well.

\section{Experiments}
\label{sec:experiments}

In this section, we will describe the experiments we conducted to evaluate the quality of the predictive intervals produced by the proposed method.

\subsection{Experimental setup}
\label{sec:experimental_setup}
With the following experiments, we aim to answer the research question: ``\emph{How good are the predictive intervals produced by combining in-context learning with conformal prediction?}''
To answer this, we first need to define what we will use to compare the quality of the uncertainty estimates produced by our method.
Given the theoretical and empirical equivalence between \gls{ICL} and ridge regression \citep{Wieringen2015}, we can leverage the mechanistic interpretation of \gls{ICL} to build exact oracle conformal predictors.
This oracle is computed using ridge regression models on the same data used in the context of the transformer model, and then following the conformal prediction algorithm to produce the predictive intervals.
We will use this oracle (which we refer to as \emph{CP with ridge}) as the baseline.
We will also compare against split conformal prediction \citep{Angelopoulos2021}, which we will refer to as \emph{split CP with ridge}: it uses the same ridge regression model as the oracle but builds the conformal predictor on a separate validation set instead of the augmented dataset. \cref{sec:split_conformal_prediction} provides more details on the split conformal prediction method.
Note that we are not interested in comparing other methods for uncertainty quantification for linear regression, such as Bayesian inference or ensemble methods \citep{Prado2021,Lakshminarayanan2017}: the scope is solely to evaluate the quality of the predictive intervals produced by a transformer model with in-context learning against the oracle conformal predictor.
Finally, we evaluate our method described in \cref{sec:method} the \emph{CP with ICL}, which uses the transformer model with in-context learning to produce the predictive intervals.
If not otherwise specified, we report all results as median and 95\% confidence intervals over 1000 test runs.
\looseness=-1

\subsection{Evaluating the quality of the predictive intervals}
\label{sec:quality_of_intervals}

\begin{figure}[t]
    \centering
    \begin{minipage}{.38\textwidth}
        \centering
        \includegraphics[width=.98\textwidth]{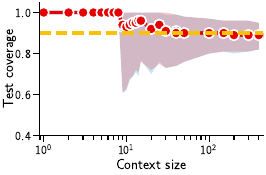}
        \captionof*{subfigure}{(a)}
        \label{fig:high_coverage_full_icl_ridge}
    \end{minipage}%
    \begin{minipage}{.38\textwidth}
        \captionsetup{type=subfigure}
        \centering
        \includegraphics[width=.98\textwidth]{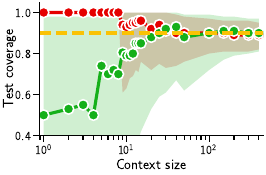}
        \captionof*{subfigure}{(b)}
        \label{fig:high_coverage_split_ridge}
    \end{minipage}%
    \begin{minipage}{.20\textwidth}
        \centering
        \setlength{\tabcolsep}{1.8pt}
\scriptsize \sffamily

\begin{tabular}{clclcl}
  \toprule                                                                                                           %
  \tikz[baseline=-1ex]{\draw[color=xkcdCerulean, fill=xkcdCerulean, line width=1.4pt, mark=*, mark options={draw=white, line width=0.5pt}, mark size=2.5pt] plot[] (0.0,0)--+(-.25,0)--+(.25,0);} & CP with ICL \\
  \tikz[baseline=-1ex]{\draw[color=xkcdRed, fill=xkcdRed, line width=1.4pt, mark=*, mark options={draw=white, line width=0.5pt}, mark size=2.5pt] plot[] (0.0,0)--+(-.25,0)--+(.25,0);}         & CP with ridge     \\
  \tikz[baseline=-1ex]{\draw[color=xkcdGreen, fill=xkcdGreen, line width=1.4pt, mark=*, mark options={draw=white, line width=0.5pt}, mark size=2.5pt] plot[] (0.0,0)--+(-.25,0)--+(.25,0);}          & Split CP with ridge \\
  \bottomrule
\end{tabular}
    \end{minipage}
    \caption{
        \textbf{Test coverage as a function of context size.}
        In (a) both \emph{CP with ICL} and \emph{CP with ridge} converge to the theoretical value of $1 - \alpha = 0.90$ ({\protect\tikz[baseline=-0.75ex]{\protect\draw[draw=xkcdGoldenrod, line width=2pt, opacity=0.75, densely dashed] (0,0)--+(.5,0);}})
        as the context size increases, and the behavior of the two methods is similar. The comparison with \emph{split CP with ridge} (b) shows that the latter has higher variance for smaller context sizes.
        \looseness=-1
    }
    \label{fig:coverage}
\end{figure}

\begin{figure}[t]
    \begin{minipage}{.37\textwidth}
        \includegraphics[width=\textwidth]{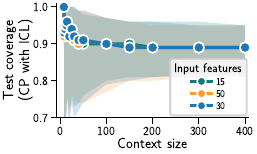}
        \captionof{figure}{
            \textbf{Coverage of \emph{CP with ICL} across varying input dimensions (15, 30, and 50).}
            Regardless of the input dimension, the test coverage converges to the theoretical value as the context size increases.
            \looseness = -1
        }
        \label{fig:coverage_features}
    \end{minipage}
    \hfill
    \begin{minipage}{.6\textwidth}
        \footnotesize
        \centering
        \captionof{table}{
            \textbf{Compute time for different methods and context sizes.}
            The computation time is reported in seconds. \emph{CP with ICL} is comparable, when not faster, than the two other methods.
        }
        \sffamily
        \label{tab:compute_time}
        \begin{tabular}{llll}
            \toprule
                    & \emph{CP with ICL}               & \emph{CP with ridge}           & \emph{Split CP with ridge}     \\
            Context &                                  &                                &                                \\
            \midrule
            $50$    & {$0.31$} {\tiny($0.29$, $0.35$)} & $0.41$ {\tiny($0.38$, $0.46$)} & $0.40$ {\tiny($0.36$, $0.45$)} \\
            $100$   & $0.32$ {\tiny($0.29$, $0.41$)}   & $0.42$ {\tiny($0.40$, $0.49$)} & $0.41$ {\tiny($0.37$, $0.49$)} \\
            $300$   & $0.38$ {\tiny($0.33$, $0.53$)}   & $0.42$ {\tiny($0.39$, $0.50$)} & $0.43$ {\tiny($0.38$, $0.60$)} \\
            \bottomrule
        \end{tabular}
    \end{minipage}
\end{figure}

We start by analyzing the test coverage, which is the proportion of the test set that falls within the predictive intervals, across different context sizes.
The test coverage is computed on a separate test set, which is not used during the training of the models, and it is collected for 1000 independent runs (each run is a set of $n$ training/in-context points and 100 testing points).
In \cref{fig:coverage}(a), we see that the behavior of \emph{CP with ICL} is very similar to the \emph{CP with ridge} oracle, with both methods converging to the theoretical value of $1 - \alpha = 0.90$ as the context size increases.
Differently, \emph{split CP with ridge} in \cref{fig:coverage}(b) shows higher variance and less reliable prediction intervals, particularly for smaller context sizes.
This is expected, as the split conformal prediction method is known to be a biased estimator of the conformal set \citep{Angelopoulos2021}.
We also analyze the test coverage by varying the input feature dimensions in \cref{fig:coverage_features}, where we observe that the test coverage converges well to the theoretical value regardless of the input dimension.
Finally, in \cref{tab:compute_time} we show the computation time of the different methods as a function of the context size.
Note that all methods are fairly compared, as we made sure to enable all possible optimizations in the code (e.g. vectorization and just-in-time compilation) and to run the experiments on the same hardware.
From these results, we can conclude that \emph{CP with ICL} exhibits similar performance to the oracle method \emph{CP with ridge}, while being as computationally efficient as the other methods.
This suggests that \emph{CP with ICL} can provide a good trade-off between computational efficiency and quality of the predictive intervals.

Next, we analyze the Wasserstein distance between the typicalness values (as a proxy of the predictive distribution) of \emph{CP with ICL} and \emph{CP with ridge} as a function of the ratio between the input dimension and the context size.
The Wasserstein distance is a measure of the discrepancy between two probability distributions, and in this case, it quantifies the difference between the predictive distributions produced by the two methods.
This metric is a stronger measure of the quality of each method, as it captures the differences in the entire predictive distribution, rather than just the coverage  (additional details in \cref{sec:wdist-setup}).
\begin{wrapfigure}{r}{0.5\textwidth}
    \centering
    \includegraphics[width=0.4\textwidth]{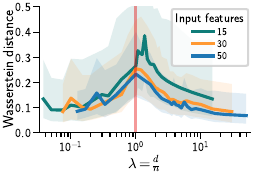}
    \caption{
        \textbf{Wasserstein distance between predictive distributions as a function of $d/n$.}
        Regardless of the input dimension $d$, the Wasserstein distance does not decrease monotonically with the data features ratio, with a peak at $d/n=1$.
        \looseness=-1
    }
    \label{fig:wdist}
\end{wrapfigure}
In \cref{fig:wdist}, we observe behavior that differs from the test coverage: the Wasserstein distance does not decrease monotonically with the context size, with a peak at $n=d$.
This phenomenon may be attributed to either overfitting noise in the labels by the \gls{ICL} method (when $n$ is large on the left hand side of the curve) or underfitting of the data by the ridge regression method (when $n$ is small on the right hand side of the curve).
We report this result to highlight non-trivial aspects of the quality of the predictive intervals produced by the \gls{ICL} method, and we plan to further investigate this behavior in future work.
This result suggests the presence of a ``double descent'' phenomenon, similar to observations in prior research \citep{Belkin2019,Nakkiran2020}.
In any case, this result suggests that the \emph{CP with ICL} method is able to provide a good approximation of the ridge regression method, even in small-sample scenarios.
\looseness=-1

\subsection{Inference on a different distribution from pre-training}

In this section, we evaluate the performance of the quality of the predictions obtained by the \emph{CP with ICL} method when the data is drawn from a different distribution than the one used during pre-training.
Recall \cref{eq:data-generation} where we define the data generation process with $p(\mbw)$ and $p(\mbx)$ as the prior distributions of the linear weights and the input data, respectively.
Explicitly, we refer to these distributions as $p_{\text{pt}}(\mbw)$ and $p_{\text{pt}}(\mbx)$, to indicate that they are the distributions used during the pre-training of the transformer model.
During inference, both the input data and the linear weights are drawn from a potentially different distribution, which we refer to as $p_{\text{inf}}(\mbw)$ and $p_{\text{inf}}(\mbx)$.
Note that this is a different setting from the classical out-of-distribution and covariate shift problems.
In our case, the training and test data used to build the in-context points are still drawn from the same distribution, which is different from the distribution used to pre-train the transformer model \citep{Garg2022}.
The objective of this experiment is to evaluate the robustness of the \emph{CP with ICL} method to distribution shifts in the input data and latent function parameters during inference.
In particular, we set $p_{\text{inf}}(\mbw) = \cN(\mbzero, \sigma^2 \mbI)$ and $p_{\text{inf}}(\mbx) = \cU(-a, a)^d$ and we analyze the performance of the \emph{CP with ICL} method as a function of the input range and the weight scale.
The results are summarized in \cref{fig:ood}.
We start by analyzing the coverage at different input ranges and weight scales in \cref{fig:coverage_input_range_shift} and \cref{fig:coverage_wscale_shift}, respectively.
We observe that the coverage is robust to changes in the input range and the weight scale, with the method providing reliable prediction intervals regardless of the covariate shift.
Next, we analyze the Wasserstein distance between the predictive distributions of the \emph{CP with ICL} method and the \emph{CP with ridge} oracle in \cref{fig:wdist_input_range_shift} and \cref{fig:wdist_wscale_shift}, respectively.
Here, we observe a different behavior: (1) as expected, we have a minimum Wasserstein distance at the same input range and weight scale used during pre-training ($a=1$ and $\sigma=1$), and (2) while the Wasserstein distance increases as the input range and the weight scale deviate from the pre-training distribution, the trend is not monotonic.
\begin{figure}[t]
    \centering
    \subfigure{
        \centering
        \includegraphics[width=.32\textwidth]{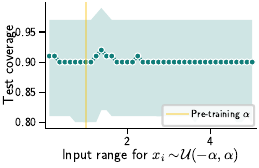}
        \label{fig:coverage_input_range_shift}
    }
    \subfigure{
        \centering
        \includegraphics[width=.32\textwidth]{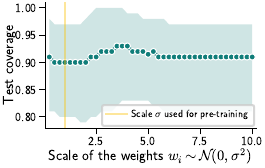}
        \label{fig:coverage_wscale_shift}
    }\\
    \subfigure{
        \centering
        \includegraphics[width=.32\textwidth]{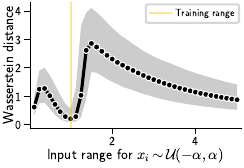}
        \label{fig:wdist_input_range_shift}
    }
    \subfigure{
        \centering
        \includegraphics[width=.32\textwidth]{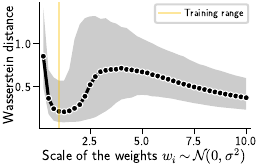}
        \label{fig:wdist_wscale_shift}
    }
    \caption{
        \textbf{Inference on a different distribution from pre-training.}
        In (a) and (b) we show the test coverage as a function of the input range and the weight scale, respectively.
        In (c) and (d) we show the Wasserstein distance between the predictive distributions as a function of the input range and the weight scale, respectively.
        \looseness=-1
    }
    \label{fig:ood}
\end{figure}

\subsection{Scaling laws for conformal prediction with in-context learning}
\label{sec:scaling_laws}

We also study the quality of the conformal prediction intervals as a function of the compute required to pre-train the transformer model.
Akin to the analysis by, e.g., \citet[][]{Kaplan2020,Hoffmann2022}, we investigate the scaling laws in a compute-constrained setting, which we intend to use to inform the design of future experiments.
Following the methodology of \citet{Hoffmann2022}, we parameterize the loss as a function of the number of model parameters $N$ and the number of training data points $D$ (alternatively, the number of training steps $T$ can be used as a proxy for $D$).
Note that the total compute required to train the model is a deterministic (but unknown) function of $N$ and $D$. %

We decompose the loss $\cL$ as a function of three distinct terms as follows:
\begin{equation}
    \label{eq:scaling_law}
    f(N, D\g \alpha, \beta, A, B, E) = E + \frac A {N^{\alpha}} + \frac B {D^{\beta}}\,.
\end{equation}
While originally \cref{eq:scaling_law} is used to model the perplexity of a language model, we adapt it to model the quality of the predictive intervals produced by the conformal prediction algorithm (see \cref{sec:experimental_setup}).
Nonetheless, we also assume that the quality of the predictive intervals is a function of the model's capacity and the amount of data used to train it.

In our experiment, we aim to estimate the parameters $\alpha$, $\beta$, $A$, $B$, and $E$ in \cref{eq:scaling_law} using a set of models trained with different sizes and compute budgets.
We then use the estimated scaling laws to predict the optimal model size and data points for a given compute budget.
The results are summarized in \cref{fig:scaling_laws}.

We now have an estimate of the scaling laws which we can use to predict the best allocation parameters/data for a given compute budget $C$:
\begin{equation}
    \label{eq:scaling_law_predict}
    \widehat{N}, \widehat{D} = \argmin_{N,D} f(N, D \g \alpha, \beta, A, B, E)\qquad \text{s.t.}\quad \text{FLOPs}(N, D) = C \,,
\end{equation}
where $\text{FLOPs}(N, D)$ is the number of floating-point operations required to train the model with $N$ parameters on $D$ tokens.

The solution of \cref{eq:scaling_law_predict} gives us a power law relationship, with the optimal number of parameters and data points scaling as $N \propto C^a$ and $D \propto C^b$ (with $a=\frac{\beta}{\alpha+\beta}$ and $b=\frac{\alpha}{\alpha+\beta}$).
We refer to the results of \citet{Hoffmann2022} for the exact solution of \cref{eq:scaling_law_predict}.
In our experiment we estimated $a=0.62$ and $b=0.38$.
This suggests that the optimal model size scales faster than the amount of data, which slightly contradicts the trends observed of \citet{Hoffmann2022}.

We hypothesize that this discrepancy is due to the fact that we are evaluating the quality of the predictive intervals of conformal prediction, rather than the perplexity of a language model.
This suggests that for uncertainty quantification tasks, models with larger capacity are more beneficial than models trained on more data.

\begin{wrapfigure}[22]{r}{0.5\textwidth}
    \includegraphics[width=0.5\textwidth]{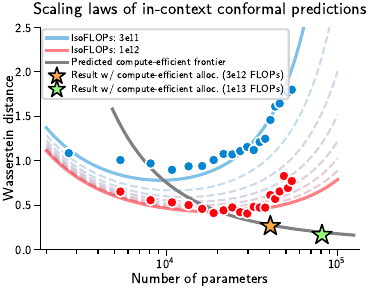}
    \caption{
        \textbf{Scaling laws for conformal prediction with in-context learning.}
        We fit and record the performance of 40 models with $3e11$
        ({\protect\tikz[baseline=-0.75ex]{\protect\draw[fill=xkcdCerulean, draw=white] circle(.75ex);}})
        and $1e12$
        ({\protect\tikz[baseline=-0.75ex]{\protect\draw[fill=xkcdBrightRed, draw=white] circle(.75ex);}})
        FLOPs budgets.
        After fitting the scaling law, we draw the two isoFLOPs contours for $C=3e11$
        ({\protect\tikz[baseline=-0.75ex]{\protect\draw[draw=xkcdCerulean, line width=1mm, opacity=0.75] (0,0)--+(.5,0);}})
        and $C=1e12$
        ({\protect\tikz[baseline=-0.75ex]{\protect\draw[draw=xkcdBrightRed, line width=1mm, opacity=0.75] (0,0)--+(.5,0);}}),
        and the predicted compute-efficient frontier
        ({\protect\tikz[baseline=-0.75ex]{\protect\draw[draw=xkcdBlack, line width=1mm, opacity=0.5] (0,0)--+(.5,0);}}).
        Finally we show the performance of two compute-efficient models trained with the best allocation of $C=3e12$
        ({\protect\tikz[baseline=-0.75ex]{\protect\node[star, star point height=1mm, line width=0.25mm, star point ratio=0.5, fill=xkcdLightOrange, draw=black] at (0,0){};}})
        and $C=1e13$
        ({\protect\tikz[baseline=-0.75ex]{\protect\node[star, star point height=1mm, line width=0.25mm, star point ratio=0.5, fill=xkcdLightGreen, draw=black] at (0,0){};}})
        .
        Both models are close to the predicted frontier.
    }
    \label{fig:scaling_laws}
\end{wrapfigure}
This might be connected to a recent phenomenon known as ``uncertainty collapse'' \citep{Kirsch2024,Fellaji2024}, where large models are able to capture the epistemic uncertainty better than smaller models.
We plan to further investigate this behavior in future work.
\looseness=-1

Finally, we use the estimated scaling laws to predict the optimal model size and data points for a given compute budget.
In \cref{fig:scaling_laws}, we start by showing the predicted performance for compute-efficient frontier (i.e., the best allocation for each compute budget).
Then, we consider a compute budget of $C=3e12$ and $C=1e13$ FLOPs, we predict the optimal model size and we train two models with the predicted allocation.
As shown in \cref{fig:scaling_laws}, the performance of these two models are close to the predicted compute-efficient frontier, which suggests that the such scaling laws are a good proxy for the quality of the predictive intervals produced by the conformal prediction algorithm.

\section{Conclusions}

We proposed an uncertainty estimation method for transformer models, exploiting in-context learning and conformal prediction to produce coverage-guaranteed intervals with a single forward pass. Building on mechanistic insights of in-context learning, we designed \emph{CP with ICL}, maintaining the theoretical guarantees of exact conformal methods. Our empirical analysis benchmarks \emph{CP with ICL} against oracle conformal predictors, enabling rigorous evaluation of uncertainty estimates without distributional or parametric assumptions, and without approximations. Experiments on synthetic data confirm that our method achieves reliable coverage with superior computational efficiency over classical approaches.

\looseness=-1
\paragraph{Limitations and Future Work.} Our method is currently limited to synthetic datasets, which do not fully reflect real-world complexity. Extending to more complex data will allow broader evaluation. The analysis so far focuses on regression tasks; extending to classification will move us closer to token-level uncertainty estimation in language models. 
Also, we use non-autoregressive models, unlike in the case of real-world language models.
The current implementation relies on a simplified version of the transformer with linear self-attention without causal mask, as this doesn't break the exchangeability assumption required by standard conformal prediction.
However, recent works have shown how full conformal prediction algorithms can be extended to nonexchangeable data and asymmetric predictive models \citep{FoygelBarber2023}, which can be used to extend our approach to autoregressive models.
\looseness=-1

\begin{small}
  \renewcommand{\bibname}{References}

\end{small}

\clearpage
\appendix

\section{Full Conformal Prediction and Coverage Guarantees}

\subsection{A Primer on Full Conformal Prediction}
\label{sec:primer_conformal_prediction}

Conformal prediction is a distribution-free framework for constructing valid prediction intervals or sets with guaranteed finite-sample coverage. Formally, let $\mathcal{D}_n = \{(\mbx_1, y_1), \dots, (\mbx_n, y_n)\}$ denote an observed dataset where $\mbx_i \in \mathbb{R}^d$ are features and $y_i \in \mathbb{R}$ are the outputs. The objective is to build a prediction set $\Gamma_\alpha(\mbx_{n+1}) \subset \mathbb{R}$ for a new feature vector $\mbx_{n+1}$, such that:
\begin{equation}
    \mathbb{P}(y_{n+1} \in \Gamma_\alpha(x_{n+1})) \geq 1 - \alpha
\end{equation}
where $\alpha \in (0, 1)$ is the user-defined significance level.
Conformal prediction relies on conformity scores, which quantify how well a candidate label $z \in \mathbb{R}$ conforms to the training data distribution. For regression tasks, a common choice is the absolute residual:
\begin{equation}
    R_i(z) = \abs{y_i - f(\widehat{\mbw}(z), \mbx_i)}, \quad i = 1, \dots, n
\end{equation}
where $f(\widehat{\mbw}(z), \mbx_i)$ is the prediction of a model trained on the augmented dataset $\mathcal{D}_{n+1}(z) = \mathcal{D}_n \cup \{(\mbx_{n+1}, z)\}$ and $\widehat{\mbw}(z)$ is the corresponding fitted parameter vector.

A crucial quantity in conformal prediction is the \emph{typicalness} function:
\begin{equation}
    \pi(z) = 1 - \frac{1}{n + 1} \sum_{i=1}^{n+1} \mathbb{I} \left( R_i(z) \leq R_{n+1}(z) \right)
\end{equation}
Here, $R_{n+1}(z) = |z - f(\widehat{w}(z), x_{n+1})|$ is the conformity score of the test point under the candidate label $z$.

The conformal prediction set $\Gamma_\alpha(x_{n+1})$ is then defined as:
\begin{equation}
    \Gamma_\alpha(x_{n+1}) = \{ z \in \mathbb{R} \,|\, \pi(z) \geq \alpha \}
\end{equation}
This set comprises all candidate labels whose conformity score is typical relative to the training data. The theoretical coverage guarantee of conformal prediction, under the assumption of exchangeability of the data points, follows from the distribution-free property of the rank-based construction of $\pi(z)$ \citep{Vovk2005,Shafer2008}

The full conformal prediction approach used in this paper integrates this classical framework with \gls{ICL}. The \gls{ICL}-enabled transformer model, pre-trained over synthetic linear regression tasks, predicts the full set of outputs $\{y_i\}_{i=1}^{n+1}$ for any input-augmented sequence $E(\mathcal{D}_n, \mbx_{n+1}, z)$, where $z$ acts as the candidate label for the test input. Crucially, because of the transformer's one-pass inference capability, conformity scores across all data points and all candidate $z$ values can be computed with a single forward pass.

\subsection{Coverage Guarantee of Full Conformal Prediction}
\label{sec:coverage_guarantee}

We now formally state and prove the marginal coverage guarantee of the full conformal prediction method described above. The result follows from the exchangeability of the data points and standard conformal prediction theory \citep{Vovk2005,Shafer2008,Lei2018}.
In particular, we restate Theorem 2.1 in \citep{Lei2018}.
 
\begin{theorem}[Marginal Coverage Guarantee]
    \label{thm:coverage}
    Assume that the data points $\{(\mbx_i, y_i)\}_{i=1}^{n}$ and the test point $(\mbx_{n+1}, y_{n+1})$ are exchangeable. Consider the full conformal prediction set $\Gamma_\alpha(\mbx_{n+1})$ constructed as:
    $$
        \Gamma_\alpha(x_{n+1}) = \left\{ z \in \mathbb{R} \,\middle|\, \pi(z) = 1 - \frac{1}{n + 1} \sum_{i=1}^{n+1} \mathbb{I} \left( R_i(z) \leq R_{n+1}(z) \right) \geq \alpha \right\}
    $$
    where the conformity scores $R_i(z)$ are computed as:
    $$
        R_i(z) = \left| y_i - \hat{y}_i(z) \right|, \quad R_{n+1}(z) = \left| z - \hat{y}_{n+1}(z) \right|
    $$
    Then, the prediction set $\Gamma_\alpha(x_{n+1})$ satisfies:
    $$
        \mathbb{P}\left( y_{n+1} \in \Gamma_\alpha(\mbx_{n+1}) \right) \geq 1 - \alpha
    $$
    where the probability is over the joint distribution of $\{(\mbx_i, y_i)\}_{i=1}^{n+1}$.
\end{theorem}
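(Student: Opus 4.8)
This is the classical marginal-coverage property of full conformal prediction, so I would run the standard rank argument, pausing only at the single place where the \gls{ICL} construction matters. \emph{Step 1 (reduce to the oracle substitution).} By definition of $\Gamma_\alpha$, the event $\{y_{n+1}\in\Gamma_\alpha(\mbx_{n+1})\}$ coincides with $\{\pi(y_{n+1})\ge\alpha\}$: it is exactly the event that, once the candidate label is fixed to the \emph{true} value $z=y_{n+1}$, the test score $R_{n+1}(y_{n+1})$ has rank at most $(1-\alpha)(n+1)$ inside the collection $R_1(y_{n+1}),\dots,R_{n+1}(y_{n+1})$. So it suffices to control the rank of $R_{n+1}(y_{n+1})$ within that collection.

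\emph{Step 2 (exchangeability of the scores).} With $z=y_{n+1}$ the augmented dataset $\cD_{n+1}(z)$ is literally $\{(\mbx_i,y_i)\}_{i=1}^{n+1}$, which is exchangeable by hypothesis. The fitted object $\widehat{\mbw}(y_{n+1})$ — equivalently, in our setting, the vector of in-context predictions $\widehat{\mby}(y_{n+1})$ — is produced by a rule invariant under permutations of these $n+1$ points: regularized least squares is symmetric in its inputs, and linear self-attention with no positional encoding and no causal mask is permutation-equivariant over the context tokens, so $\widehat y_i$ is a symmetric function of $(\mbx_i,y_i)$ together with the whole collection. Hence the map $\{(\mbx_i,y_i)\}_{i=1}^{n+1}\mapsto\bigl(R_1(y_{n+1}),\dots,R_{n+1}(y_{n+1})\bigr)$ commutes with permutations, and the random vector of scores is exchangeable.

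\emph{Step 3 (uniform rank and conclusion).} For an exchangeable vector of real random variables the rank of the last coordinate is stochastically dominated by — and, when ties have probability zero, equal to — the uniform law on $\{1,\dots,n+1\}$; this is the textbook symmetry computation. Hence the probability that this rank exceeds $(1-\alpha)(n+1)$ is at most $\alpha$. Ties are absorbed by the $\le$ convention in the definition of $\pi$ (or a randomized tie-break if exactness is wanted), and under the continuous noise model of the paper they do not arise almost surely. Combining with Step 1 yields $\bbP(y_{n+1}\in\Gamma_\alpha(\mbx_{n+1}))\ge 1-\alpha$.

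\emph{Main obstacle.} The only non-mechanical point is Step 2: one must check that the scoring rule is genuinely symmetric in the $n+1$ augmented points, which for the in-context predictor reduces to permutation-equivariance of linear self-attention (and would fail, e.g., under a causal mask). Everything else is the standard exchangeability-of-ranks argument; the $O(1/(n+1))$ floor-versus-ceiling slack is swallowed by the convention defining $\Gamma_\alpha$ and is not a genuine difficulty.
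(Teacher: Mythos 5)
Your proof is correct and follows essentially the same rank-exchangeability argument as the paper's own proof. If anything, your Step 2 is more explicit than the paper about why the scores evaluated at the true label $z=y_{n+1}$ form an exchangeable vector (permutation-symmetry of the fitting rule, including the attention map), a point the paper defers to its separate theorem on symmetric conformity scores and the accompanying remark.
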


\begin{proof}
    By construction, the augmented dataset $\mathcal{D}_{n+1}(z)$ consists of $n + 1$ samples, including the candidate point $(\mbx_{n+1}, z)$. Given the assumption of exchangeability, any permutation of the dataset $\mathcal{D}_{n+1}(z)$ has the same joint distribution.
    Consider the conformity scores $\{ R_i(z) \}_{i=1}^{n+1}$ computed on $\mathcal{D}_{n+1}(z)$. Under exchangeability, the rank of the conformity score $R_{n+1}(z)$ among all scores $\{R_i(z)\}_{i=1}^{n+1}$ is uniformly distributed over $\{1, \dots, n + 1\}$. Therefore, for any fixed candidate label $z$, we have:
    $$
        \mathbb{P} \left( \frac{\rank \left( R_{n+1}(z) \right)}{n + 1} \leq \alpha \right) \leq \alpha
    $$
    Equivalently, this implies:
    \[
        \mathbb{P} \left( \pi(z) \geq \alpha \right) \geq 1 - \alpha
    \]
    Since the true label $y_{n+1}$ corresponds to some candidate value $z = y_{n+1}$, and given that our prediction set $\Gamma_\alpha(x_{n+1})$ collects all $z$ such that $\pi(z) \geq \alpha$, it follows directly that:
    \[
        \mathbb{P} \left( y_{n+1} \in \Gamma_\alpha(\mbx_{n+1}) \right) \geq 1 - \alpha
    \]
    Thus, the full conformal prediction set satisfies the desired marginal coverage guarantee.
\end{proof}

\begin{remark}
    This result holds for any finite sample size $n$ and any model used to compute the conformity scores, including the transformer model with in-context learning as employed in this work, provided the exchangeability assumption is satisfied.
\end{remark}

\subsection{Generalized Conformity Scores with Symmetric Functions}
\label{sec:generalized_conformity_scores}

Building on classical conformal prediction, we extend the coverage guarantee to more general conformity scores, in particular to the permutation-invariant function implemented by our in-context learning transformer model. This generalization is critical in our setting, where conformity scores emerge from the joint transformer output over tokenized sequences.

We recall the setup above and state the following theorem (crf Remark 2.4 in \citep{Lei2018}).

\begin{theorem}[Coverage Guarantee with Symmetric Conformity Score]
    \label{thm:symmetric_coverage}
    Let $\mathcal{D}_n = \{(\mbx_1, y_1), \dots, (\mbx_n, y_n)\}$ be the observed dataset and $(\mbx_{n+1}, y_{n+1})$ the test point. Assume $\mathcal{D}_{n+1} = \mathcal{D}_n \cup \{(\mbx_{n+1}, y)\}$ is exchangeable for any candidate label $y$. Define a conformity score $R_i(y)$ as:
    $$
        R_i(y) = f\big( (\mbx_1, y_1), \dots, (\mbx_{i-1}, y_{i-1}), (\mbx_{i+1}, y_{i+1}), \dots, (\mbx_{n+1}, y); \, (\mbx_i, y_i) \big)
    $$
    where $f$ is a measurable function symmetric in its first $n$ arguments. Then, the prediction set
    $$
        \Gamma_\alpha(\mbx_{n+1}) = \left\{ y \in \mathbb{R} \,\middle|\, \pi(y) \geq \alpha \right\}
    $$
    with
    $$
        \pi(y) = 1 - \frac{1}{n + 1} \sum_{i=1}^{n+1} \mathbb{I} \left( R_i(y) \leq R_{n+1}(y) \right)
    $$
    satisfies the marginal coverage guarantee:
    $$
        \mathbb{P}\left( y_{n+1} \in \Gamma_\alpha(\mbx_{n+1}) \right) \geq 1 - \alpha
    $$
\end{theorem}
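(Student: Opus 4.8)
The plan is to reuse the argument behind \cref{thm:coverage}, the only genuinely new ingredient being the passage from a score that is merely symmetric in its first $n$ arguments to an exchangeable family of scores. First I would instantiate the candidate label at the truth, $y = y_{n+1}$, so that the augmented dataset $\mathcal{D}_{n+1}(y_{n+1})$ coincides with $\mathcal{D}_{n+1} = \{(\mbx_i, y_i)\}_{i=1}^{n+1}$, which is exchangeable by assumption. For this choice of $y$, the score $R_i(y_{n+1})$ is obtained by feeding $f$ the $n$ data points other than the $i$-th in its order-insensitive block, and the $i$-th data point in the distinguished last slot; measurability of $f$ guarantees that each $R_i(y_{n+1})$ is a well-defined random variable.

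The key step is to show that the random vector $\big(R_1(y_{n+1}), \dots, R_{n+1}(y_{n+1})\big)$ is exchangeable. Let $\sigma$ be a permutation of $\{1,\dots,n+1\}$ and let $R_i^{\sigma}$ denote the score computed after relabelling the points by $\sigma$. Because $f$ is symmetric in its first $n$ arguments, reshuffling the points within that block leaves its value unchanged, so $R_i^{\sigma} = R_{\sigma^{-1}(i)}(y_{n+1})$; that is, applying $\sigma$ to the data merely permutes the coordinates of the score vector. Combining this identity with the exchangeability of $\mathcal{D}_{n+1}$ shows that the law of $\big(R_1(y_{n+1}),\dots,R_{n+1}(y_{n+1})\big)$ is invariant under coordinate permutations, i.e. the scores are exchangeable. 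I would write this bookkeeping out explicitly, since it is where the hypothesis on $f$ is actually used.

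From exchangeability of the scores, the rank $\rank\big(R_{n+1}(y_{n+1})\big) = \sum_{i=1}^{n+1}\mathbb{I}\big(R_i(y_{n+1}) \leq R_{n+1}(y_{n+1})\big)$ is stochastically at least as large as a uniform draw from $\{1,\dots,n+1\}$; concretely, $\mathbb{P}\big(\rank(R_{n+1}(y_{n+1})) \leq \lceil \alpha(n+1)\rceil\big) \leq \alpha$, where the inequality rather than an equality absorbs the effect of ties, which only increase the left-hand rank. Rewriting, this is precisely $\mathbb{P}\big(\pi(y_{n+1}) \geq \alpha\big) \geq 1-\alpha$. Finally, since $y_{n+1}$ is one of the admissible candidate labels and $\Gamma_\alpha(\mbx_{n+1}) = \{y : \pi(y) \geq \alpha\}$, the event $\{\pi(y_{n+1}) \geq \alpha\}$ is contained in $\{y_{n+1} \in \Gamma_\alpha(\mbx_{n+1})\}$, which delivers the claimed marginal coverage bound.

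I expect the main obstacle to be precisely the permutation bookkeeping together with the tie-breaking: one must track which argument slot of $f$ each data point occupies before and after $\sigma$ is applied, verify that the ``distinguished last argument'' structure is compatible with relabelling, and then check that ties make the rank inequality go in the safe direction. Once this is in place, the probabilistic core is identical to that of \cref{thm:coverage} and requires no further ideas; I would also add a remark that the absolute-residual score of \cref{thm:coverage} is the special case $f\big((\mbx_j,y_j)_{j\neq i};(\mbx_i,y_i)\big) = |y_i - \hat y_i(y)|$, so \cref{thm:symmetric_coverage} subsumes it.
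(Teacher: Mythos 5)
Your proposal follows the same route as the paper's proof: set the candidate label to the truth $y=y_{n+1}$, use the symmetry of $f$ in its first $n$ arguments together with exchangeability of $\mathcal{D}_{n+1}$ to deduce that the score vector $(R_1(y_{n+1}),\dots,R_{n+1}(y_{n+1}))$ is exchangeable, and then convert a rank statement into the coverage bound. Your explicit bookkeeping $R_i^{\sigma}=R_{\sigma^{-1}(i)}(y_{n+1})$ is in fact \emph{more} careful than the paper's argument, which only asserts that the rank of $R_{n+1}(y_{n+1})$ is uniform; this is the right place to spend the effort, and it is where the hypothesis on $f$ enters.

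The final probabilistic step, however, does not go through as written, for two related reasons. First, you bound the wrong tail: membership $y_{n+1}\in\Gamma_\alpha(\mbx_{n+1})$ is equivalent to $\rank(R_{n+1}(y_{n+1}))\leq(1-\alpha)(n+1)$, so what is needed is an upper bound on the \emph{upper} tail, $\mathbb{P}\left(\rank(R_{n+1}(y_{n+1}))>(1-\alpha)(n+1)\right)\leq\alpha$, whereas your displayed inequality $\mathbb{P}\left(\rank(R_{n+1}(y_{n+1}))\leq\lceil\alpha(n+1)\rceil\right)\leq\alpha$ controls the \emph{lower} tail; the two are not related by ``rewriting''. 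Second, the tie argument points the wrong way: with $\rank=\sum_{i}\mathbb{I}(R_i\leq R_{n+1})$, ties \emph{increase} the rank and therefore \emph{decrease} $\pi(y_{n+1})$, i.e.\ they work \emph{against} coverage (if all scores coincide, $\pi(y_{n+1})=0$ and the true label is never covered). Consequently, ``rank stochastically at least uniform'' yields bounds in the useless direction. The paper avoids this by implicitly assuming the scores are almost surely distinct, so that the rank is exactly uniform on $\{1,\dots,n+1\}$ and $\mathbb{P}(\rank\leq k)=k/(n+1)$; to genuinely absorb ties one must either assume a continuous joint distribution of the scores or define the rank with the strict inequality $\mathbb{I}(R_i<R_{n+1})$, so that ties favor inclusion. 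With either fix in place, the rest of your argument (and the containment $\{\pi(y_{n+1})\geq\alpha\}\subseteq\{y_{n+1}\in\Gamma_\alpha(\mbx_{n+1})\}$) completes the proof exactly as in the paper.
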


\begin{proof}
    The key observation is that the exchangeability of $\mathcal{D}_{n+1}$ ensures the uniform distribution of the rank of $R_{n+1}(y_{n+1})$ among $\{ R_i(y_{n+1}) \}_{i=1}^{n+1}$, regardless of the specific form of $f$, provided that $f$ is symmetric in its first $n$ arguments.

    Since $f$ is symmetric, the induced conformity scores $\{ R_i(y) \}$ are equally distributed under permutation of the first $n$ data points. As a result, the rank of $R_{n+1}(y_{n+1})$ is uniformly distributed over $\{1, \dots, n + 1\}$:
    $$
        \mathbb{P} \left( \text{rank} \left( R_{n+1}(y_{n+1}) \right) \leq k \right) = \frac{k}{n + 1}
    $$
    for any $k \in \{1, \dots, n + 1\}$. By construction of $\pi(y)$, we have:
    $$
        \pi(y_{n+1}) = 1 - \frac{\text{rank} \left( R_{n+1}(y_{n+1}) \right)}{n + 1}
    $$
    Thus,
    $$
        \mathbb{P} \left( \pi(y_{n+1}) \geq \alpha \right) = \mathbb{P} \left( \text{rank} \left( R_{n+1}(y_{n+1}) \right) \leq (1 - \alpha)(n + 1) \right) \geq 1 - \alpha
    $$
    which concludes the proof.
\end{proof}

\begin{remark}
    This result shows that our transformer-based in-context learning approach, which implicitly computes conformity scores through predictions based on the permutation-invariant attention mechanism, satisfies the same marginal coverage guarantees as classical conformal methods. The symmetry condition is naturally satisfied by the self-attention layers without causal mask, as their output for token $i$ depends symmetrically on the other tokens $(\mbx_j, y_j)$, $j \neq i$.
\end{remark}

\subsection{Justification of the Grid Range for Candidate Labels}
\label{sec:grid_range}

An essential practical consideration in the full conformal prediction method used in this work is the selection of the candidate grid $\mathcal{Z}$ over which the conformity scores and typicalness function $\pi(z)$ are evaluated. As exact computation of $\Gamma_\alpha(x_{n+1})$ requires searching over a continuous domain, in practice we discretize the output space and compute conformity scores over a finite grid of candidate labels:
$$ 
    \mathcal{Z} = \left\{  z \in \left[y_{\text{min}}, y_{\text{max}}\right] \right\}
$$
with $[y_{\text{min}}, y_{\text{max}}]$ denoting the search interval.
The choice of this interval is theoretically motivated by the properties of order statistics in conformal inference. Specifically, selecting $[y_{\text{min}}, y_{\text{max}}]$ as the sample range of observed responses incurs a coverage loss of at most $\frac{2}{n + 1}$, i.e.,
$$
    \mathbb{P} \left( y_{n+1} \in \left[ y_{(1)}, y_{(n)} \right] \right) \geq 1 - \frac{2}{n + 1}
$$
where $y_{(1)}$ and $y_{(n)}$ denote the minimum and maximum observed values in the training data.
Assuming the data points $\{ y_i \}_{i=1}^n$ and the test point $y_{n+1}$ are i.i.d. draws from an continuous  distribution $p(y)$, the empirical order statistics $\{ y_{(1)}, \dots, y_{(n)} \}$ provide a natural quantile-based partitioning of the sample space.
By the properties of order statistics for i.i.d. samples, the probability that a new independent draw $y_{n+1}$ falls outside the range $\left[ y_{(1)}, y_{(n)} \right]$ corresponds to the probability that $y_{n+1}$ is either smaller than $y_{(1)}$ or larger than $y_{(n)}$:
$$
    \mathbb{P} \left( y_{n+1} < y_{(1)} \right) + \mathbb{P} \left( y_{n+1} > y_{(n)} \right)
$$

It is well known \citep{David2004} that, for continuous distributions, the probability that a new i.i.d. sample falls below the minimum of $n$ i.i.d. samples is given by
$$
    \mathbb{P} \left( y_{n+1} < y_{(1)} \right) = \int F_Y(y)^n \, dF_Y(y) = \frac{1}{n + 1}
$$
Similarly, the probability of exceeding the sample maximum is:
$$
    \mathbb{P} \left( y_{n+1} > y_{(n)} \right) = \int \left( 1 - F_Y(y) \right)^n \, dF_Y(y) = \frac{1}{n + 1}
$$
where $F_Y(y)$ is the cumulative distribution function of the output variable.

Summing these contributions, the total probability of $y_{n+1}$ lying outside the sample range is:
$$
    \mathbb{P} \left( y_{n+1} \notin \left[ y_{(1)}, y_{(n)} \right] \right) = \frac{2}{n + 1}
$$
Thus, the probability that $y_{n+1}$ lies within the empirical range is:
$$
    \mathbb{P} \left( y_{n+1} \in \left[ y_{(1)}, y_{(n)} \right] \right) = 1 - \frac{2}{n + 1}
$$
\begin{remark}
    This result is non-asymptotic and holds exactly for any finite sample size $n$ under the i.i.d. sampling assumption. Importantly, it does not require knowledge of the distribution $p(y)$. When applied to the construction of the candidate grid $\mathcal{Z}$ in full conformal prediction, it ensures that the grid spans a region containing the true response $y_{n+1}$ with high probability.
\end{remark}

To further mitigate any potential loss of coverage due to discretization or boundary effects, we enlarge the search range by a fraction of its empirical span. Following this approach, we extend the search interval symmetrically beyond the empirical minimum and maximum of the observed outputs:
$$
    \left[ y_{\text{min}}, y_{\text{max}} \right] = \left[ y_{(0)} - 0.25 \cdot \Delta y, \; y_{(n)} + 0.25 \cdot \Delta y \right]
$$
where $\Delta y = y_{(n)} - y_{(0)}$ is the empirical range of the output variable.

\begin{remark}
    The grid design strategy employed here is directly inspired by the theoretical insights of \citet{Lei2017}, specifically Remark 5 therein, which provides finite-sample justification for extending the search interval beyond the observed range of the data. This ensures robustness of the predictive intervals to sampling variability and model approximation errors, particularly in small-sample regimes.
\end{remark}

\subsection{Split Conformal Prediction}
\label{sec:split_conformal_prediction}

In contrast to the full conformal prediction framework discussed previously, split conformal prediction (split CP) offers a computationally efficient alternative for constructing marginal prediction intervals. Rather than augmenting the training set with candidate outputs and recomputing conformity scores for each trial label, split CP partitions the data into disjoint subsets, using one for training and the other for calibration. This strategy trades some statistical efficiency for computational simplicity, which is particularly advantageous when model evaluation is expensive.

Formally, let the original dataset $\mathcal{D} = \{(\mbx_i, y_i)\}_{i=1}^{n}$ be partitioned into a proper training set $\mathcal{D}_{\text{train}}$ of size $n_{\text{train}}$, and a calibration set $\mathcal{D}_{\text{cal}}$ of size $n_{\text{cal}}$, such that $N = n_{\text{train}} + n_{\text{cal}}$. The training set $\mathcal{D}_{\text{train}}$ is used to fit a regression model $\widehat{\mu} : \mathbb{R}^d \to \mathbb{R}$. This model serves as the point predictor.
Given $\widehat{\mu}$, we evaluate the conformity scores on the calibration set:
$$
    R_i = \left| y_i - \widehat{\mu}(\mbx_i) \right|, \quad \text{for} \; (\mbx_i, y_i) \in \mathcal{D}_{\text{cal}}
$$
These conformity scores represent the empirical residuals of the model over the held-out calibration points. To construct the prediction interval for a new input $\mbx_{n+1}$, we compute the quantile threshold:
$$
    Q = \text{Quantile}_{1 - \alpha} \left( \{ R_i \}_{i=1}^{n_{\text{cal}}} \right)
$$
where $\alpha \in (0,1)$ is the miscoverage rate. The resulting prediction interval for $\mbx_{n+1}$ is symmetric about the point prediction $\widehat{\mu}(x_{n+1})$:
$$
    \Gamma_\alpha^{\text{split}}(\mbx_{n+1}) = \left[ \widehat{\mu}(\mbx_{n+1}) - Q, \; \widehat{\mu}(\mbx_{n+1}) + Q \right]
$$

Under the assumption of exchangeability between calibration points and the new test point, the split CP interval retains the desired marginal coverage property:
$$
    \mathbb{P} \left( y_{n+1} \in \Gamma_\alpha^{\text{split}}(\mbx_{n+1}) \right) \geq 1 - \alpha
$$

\section{Addition Experiment: Visualization of Predictive Performance of ICL vs Ridge on Single Data Point in Full CP Algorithm}

Before the comparsion experiemnts, we also do one experiment that visualizes the predictive performance of In-Context Learning (ICL) versus Ridge Regression using a synthetic dataset. This experiment focuses on comparing the prediction accuracy and uncertainty quantification between these two methods when applied to a single input point.

In detail, we generate 1000 trial values, \( Y_{\text{trial}} \), covering a range that extends beyond the minimum and maximum observed test outputs. For this experiment, we use a single input feature point, \( x \), and run the Full Conformal Prediction (CP) algorithm with both ICL and Ridge Regression as the learning models. The Full CP algorithm is employed to calculate the prediction intervals for both models. Specifically, the algorithm involves augmenting the training data with trial values and computing the corresponding residuals and p-values (\( p(y) \)) to determine the prediction intervals.

\begin{figure}[H]
    \centering
    \includegraphics[width=0.7\textwidth]{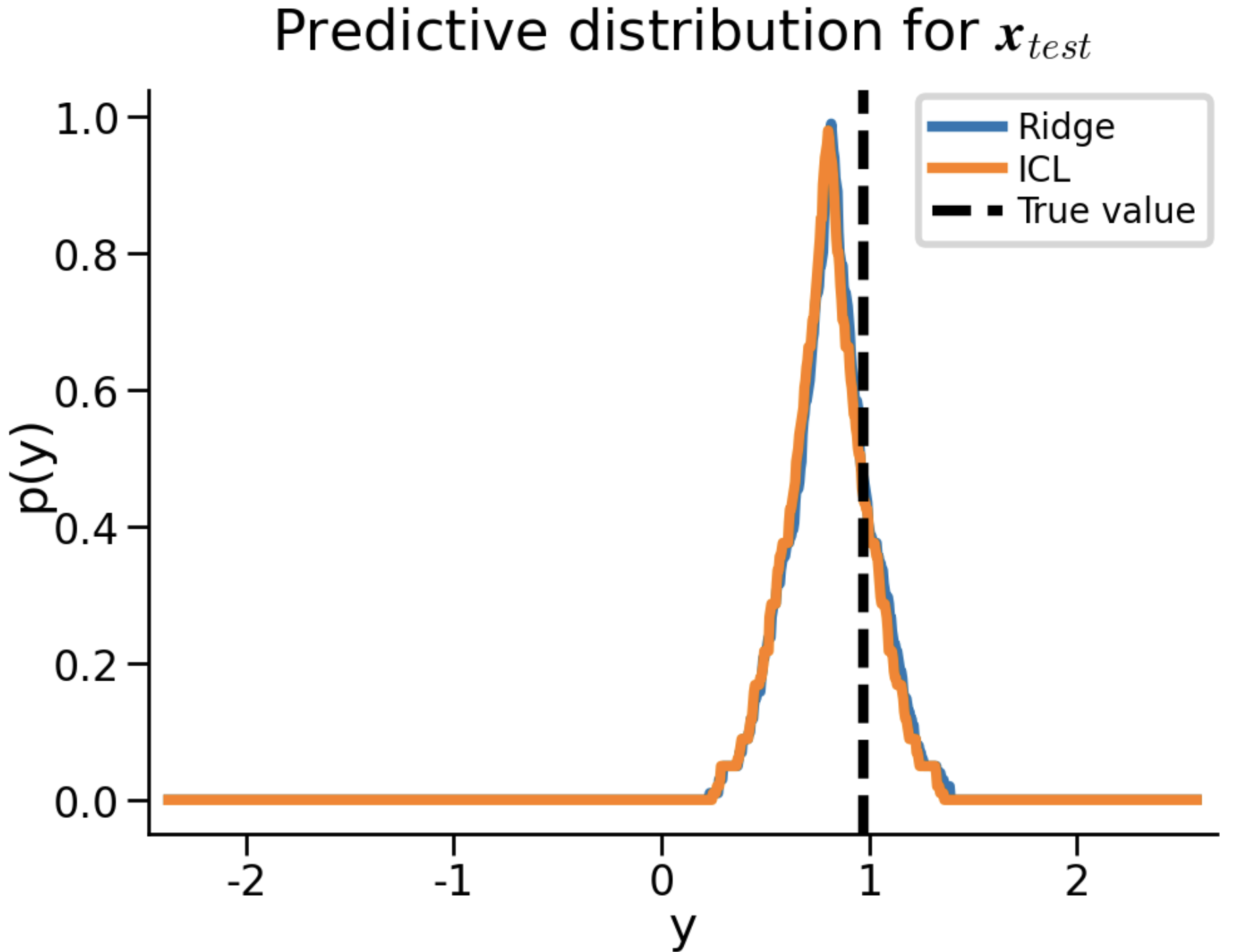}
    \caption{Visualization of Predictive Performance: ICL vs Ridge on Single Data Point}
    \label{fig:Visualization of predictive ICL vs Ridge}
\end{figure}

As shown in Figure \ref{fig:Visualization of predictive ICL vs Ridge}, the predictive distributions for $\mbx_{test}$ of both models were plotted and compared against the true test values. In the figure, the x-axis represents the range of possible output values (\( y \)), and the y-axis represents the predictive probability (\( p(y) \)). The predictive distributions from both ICL and Ridge Regression models exhibit high accuracy, with peaks near the true value (indicated by the dashed line). This comparison shows that the predictive distributions from ICL closely match those from Ridge Regression, indicating that ICL performs comparably to the Ridge Regression oracle method under the Full CP framework.

\section{Experimental Setup Details}
The LSA transformer model is pre-trained using masked sequences with a mean squared error (MSE) loss function. The input prompt is constructed by concatenating the input  $X$  and the noisy output  $y$ , forming sequences that include context tokens, with the query token placed at the last position of the sequence.

During pre-training, as depicted in \cref{fig:pre-train}, the loss is computed solely on the masked query token (the last element of the sequence), aiming to minimize the difference between the predicted and actual outputs. The training process is run for 10,000 iterations, and the total FLOPs (floating-point operations) are tracked to ensure the training stays within a predefined computational budget. A FLOP cost analysis is performed at each step to log the computational requirements, and training halts if the total FLOPs exceed the set limit.

For testing, we employ conformal prediction methods to evaluate the uncertainty estimates of the model. The testing dataset consists of 1,000 test cases, where we split the dataset into training and calibration subsets, with a calibration percentage set by the configuration. We perform experiments using three methods: Split Conformal Prediction with Ridge Regression, Full Conformal Prediction with ICL, and Full Conformal Prediction with Ridge Regression. For each test case, prediction intervals (prediction bands) are computed, and their width is recorded for each method. We also measure empirical coverage, ensuring the proportion of true values falling within the prediction bands is consistent with the specified confidence level. Additionally, the computation time for each method is logged. Finally, we compute the Wasserstein distance between the p-values generated by the ICL and Ridge methods, providing a measure of distributional similarity between these approaches. Since we vary the random seeds and have 1000 test cases, the results of these tests are averaged and then stored for further analysis.

\subsection{Experimental Setup for Empirical Distribution Comparison}
\label{sec:wdist-setup}

In the experimental section, we use the Wasserstein distance to compare the predictive behavior of two conformal prediction (CP) procedures: \emph{CP with ridge regression} and \emph{CP with in-context learning (ICL)}. 
For each method, we apply the full conformal prediction and we evaluate the empirical distribution of typicalness values $\widehat{\pi}$ across a set of test points for each method and compare them via the Wasserstein-1 distance (also known as the Earth Mover’s Distance). Given two probability distributions $\mu$ and $\nu$ on $\mathbb{R}$, the Wasserstein-1 distance is defined as
\begin{equation}
  W_1(\mu, \nu) = \inf_{\gamma \in \Gamma(\mu, \nu)} \int_{\mathbb{R} \times \mathbb{R}} |x - y| \, d\gamma(x, y),
\end{equation}
where $\Gamma(\mu, \nu)$ denotes the set of all couplings of $\mu$ and $\nu$. In one dimension, the Wasserstein-1 distance admits a closed-form expression in terms of the cumulative distribution functions (CDFs) $F_\mu$ and $F_\nu$:
\begin{equation}
  W_1(\mu, \nu) = \int_0^1 \left| F_\mu^{-1}(t) - F_\nu^{-1}(t) \right| \, dt.
\end{equation}
We use this formulation to compute the distance between the empirical distributions of $\widehat{\pi}$ values under ridge and ICL, averaged out over the points in the test set.

\subsection{Detailed Scaling Law Methodology}
\label{sec:appendix_scaling_laws}

We use JAX's Ahead-of-Time (AOT) compilation \citep{JAX2018} to compile one training step of the model (including the gradient computation and the optimizer update) and to calculate the exact number of floating-point operations (FLOPs) required to train the model. This is more accurate than the commonly used approximation $C=6ND$.

We fit $K=40$ models with different sizes and compute budgets; for each training we record the loss $\cL_i$ as well as the number of model parameters $N_i$ and the number of training tokens $D_i$. We then fit the model in \cref{eq:scaling_law} to the data using the \texttt{chinchilla} package \citep{ChinchillaGithub}, by minimizing the following objective using L-BFGS \citep{Nocedal1980}:

\begin{equation}
    \label{eq:scaling_law_loss_appendix}
    \cL(\alpha, \beta, A, B, E) = \frac 1 K \sum_{i=1}^K \ell_\lambda \left( \log f(N_i, D_i\g \alpha, \beta, A, B, E), \log \cL_i \right)\,,
\end{equation}

where $\ell_\lambda$ is the asymmetric MAE loss function (with $\lambda=0.1$) defined as:

\begin{equation}
    \ell_\lambda(y, \hat{y})=
    \begin{cases}
        y - \hat{y},                     & \text{if } y - \hat{y} > 0 \\
        \lambda \cdot \abs{y - \hat{y}}, & \text{otherwise}
    \end{cases}
\end{equation}

Following the suggestion in \citep{Hoffmann2022}, we fit the model starting from various initializations and select the one with the lowest loss. After fitting, we plot isoFLOPs contours of the loss function $f(N, D \g \alpha, \beta, A, B, E)$, which shows the quality of the predictive intervals produced by the conformal prediction algorithm as a function of the model size and the amount of data used to train it.


\begin{thebibliography}{46}
\providecommand{\natexlab}[1]{#1}
\providecommand{\url}[1]{\texttt{#1}}
\expandafter\ifx\csname urlstyle\endcsname\relax
  \providecommand{\doi}[1]{doi: #1}\else
  \providecommand{\doi}{doi: \begingroup \urlstyle{rm}\Url}\fi

\bibitem[Akyurek et~al.(2023)Akyurek, Schuurmans, Andreas, and Zhou]{Akyurek2023}
Ekin Akyurek, Dale Schuurmans, Jacob Andreas, and Tengyu Maand~Denny Zhou.
\newblock What learning algorithm is in-context learning? investigations with linear models.
\newblock 2023.

\bibitem[Angelopoulos and Bates(2021)]{Angelopoulos2021}
Anastasios~N. Angelopoulos and Stephen Bates.
\newblock In \emph{A Gentle Introduction to Conformal Prediction and Distribution-Free Uncertainty Quantification}, 2021.

\bibitem[Barber et~al.(2023)Barber, Cand{\`e}s, Ramdas, and Tibshirani]{FoygelBarber2023}
Rina~Foygel Barber, Emmanuel~J. Cand{\`e}s, Aaditya Ramdas, and Ryan~J. Tibshirani.
\newblock {Conformal prediction beyond exchangeability}.
\newblock \emph{The Annals of Statistics}, 51\penalty0 (2):\penalty0 816 -- 845, 2023.
\newblock \doi{10.1214/23-AOS2276}.
\newblock URL \url{https://doi.org/10.1214/23-AOS2276}.

\bibitem[Belkin et~al.(2019)Belkin, Hsu, Ma, and Mandal]{Belkin2019}
Mikhail Belkin, Daniel Hsu, Siyuan Ma, and Soumik Mandal.
\newblock Reconciling modern machine-learning practice and the classical bias-variance trade-off.
\newblock \emph{Proceedings of the National Academy of Sciences}, 116\penalty0 (32):\penalty0 15849--15854, 2019.
\newblock \doi{10.1073/pnas.1903070116}.
\newblock URL \url{https://www.pnas.org/doi/abs/10.1073/pnas.1903070116}.

\bibitem[Blundell et~al.(2015)Blundell, Cornebise, Kavukcuoglu, and Wierstra]{Blundell2015}
Charles Blundell, Julien Cornebise, Koray Kavukcuoglu, and Daan Wierstra.
\newblock {Weight Uncertainty in Neural Network}.
\newblock In Francis Bach and David Blei, editors, \emph{Proceedings of the 32nd International Conference on Machine Learning}, volume~37 of \emph{Proceedings of Machine Learning Research}, pages 1613--1622. PMLR, 2015.
\newblock URL \url{http://proceedings.mlr.press/v37/blundell15.html}.

\bibitem[Bradbury et~al.(2018)Bradbury, Frostig, Hawkins, Johnson, Leary, Maclaurin, Necula, Paszke, Vander{P}las, Wanderman-{M}ilne, and Zhang]{JAX2018}
James Bradbury, Roy Frostig, Peter Hawkins, Matthew~James Johnson, Chris Leary, Dougal Maclaurin, George Necula, Adam Paszke, Jake Vander{P}las, Skye Wanderman-{M}ilne, and Qiao Zhang.
\newblock {{JAX}: Composable Transformations of {P}ython+{N}um{P}y Programs}, 2018.
\newblock URL \url{http://github.com/google/jax}.

\bibitem[Brown et~al.(2020)Brown, Mann, Ryder, Subbiah, Kaplan, Dhariwal, Neelakantan, Shyam, Sastry, Askell, Agarwal, Herbert-Voss, Krueger, Henighan, Child, Ramesh, Ziegler, Wu, Winter, Hesse, Chen, Sigler, Litwin, Gray, Chess, Clark, Berner, McCandlish, Radford, Sutskever, and Amodei]{Brown2020}
Tom Brown, Benjamin Mann, Nick Ryder, Melanie Subbiah, Jared~D Kaplan, Prafulla Dhariwal, Arvind Neelakantan, Pranav Shyam, Girish Sastry, Amanda Askell, Sandhini Agarwal, Ariel Herbert-Voss, Gretchen Krueger, Tom Henighan, Rewon Child, Aditya Ramesh, Daniel Ziegler, Jeffrey Wu, Clemens Winter, Chris Hesse, Mark Chen, Eric Sigler, Mateusz Litwin, Scott Gray, Benjamin Chess, Jack Clark, Christopher Berner, Sam McCandlish, Alec Radford, Ilya Sutskever, and Dario Amodei.
\newblock {Language Models are Few-Shot Learners}.
\newblock In H.~Larochelle, M.~Ranzato, R.~Hadsell, M.F. Balcan, and H.~Lin, editors, \emph{Advances in Neural Information Processing Systems}, volume~33, pages 1877--1901. Curran Associates, Inc., 2020.
\newblock URL \url{https://proceedings.neurips.cc/paper_files/paper/2020/file/1457c0d6bfcb4967418bfb8ac142f64a-Paper.pdf}.

\bibitem[Chen et~al.(2016)Chen, Wang, Ha, and Barber]{Chen2016}
Wenyu Chen, Zhaokai Wang, Wooseok Ha, and Rina~Foygel Barber.
\newblock Trimmed conformal prediction for high-dimensional models.
\newblock 2016.

\bibitem[Chen et~al.(2017)Chen, Chun, and Barber]{Chen2017}
Wenyu Chen, Kelli-Jean Chun, and Rina~Foygel Barber.
\newblock Discretized conformal prediction for efficient distribution-free inference.
\newblock 2017.

\bibitem[Chinchilla()]{ChinchillaGithub}
Chinchilla.
\newblock \url{https://github.com/kyo-takano/chinchilla}, 2024.

\bibitem[David and Nagaraja(2004)]{David2004}
H.A. David and H.N. Nagaraja.
\newblock \emph{Order Statistics}.
\newblock Wiley Series in Probability and Statistics. Wiley, 2004.
\newblock ISBN 9780471654018.
\newblock URL \url{https://books.google.fr/books?id=bdhzFXg6xFkC}.

\bibitem[Falck et~al.(2024)Falck, Wang, and Holmes]{Falck2024}
Fabian Falck, Ziyu Wang, and Chris Holmes.
\newblock Is in-context learning in large language models bayesian? a martingale perspective.
\newblock 2024.

\bibitem[Fellaji and Pennerath(2024)]{Fellaji2024}
Mohammed Fellaji and Frédéric Pennerath.
\newblock The epistemic uncertainty hole: an issue of bayesian neural networks, 2024.
\newblock URL \url{https://arxiv.org/abs/2407.01985}.

\bibitem[Finn et~al.(2017)Finn, Abbeel, and Levine]{Finn2017}
Chelsea Finn, Pieter Abbeel, and Sergey Levine.
\newblock Model-agnostic meta-learning for fast adaptation of deep networks.
\newblock 2017.

\bibitem[Fong and Holmes(2021)]{Fong2021}
Edwin Fong and Chris Holmes.
\newblock Conformal bayesian computation.
\newblock 2021.

\bibitem[Gal(2016)]{Gal2016b}
Yarin Gal.
\newblock \emph{Uncertainty in Deep Learning}.
\newblock PhD thesis, 2016.

\bibitem[Garg et~al.(2022)Garg, Tsipras, Liang, and Valiant]{Garg2022}
Shivam Garg, Dimitris Tsipras, Percy~S Liang, and Gregory Valiant.
\newblock {What can transformers learn in-context? A case study of simple function classes}.
\newblock \emph{Advances in Neural Information Processing Systems}, 35:\penalty0 30583--30598, 2022.

\bibitem[Guo et~al.(2017)Guo, Pleiss, Sun, and Weinberger]{Guo2017}
Chuan Guo, Geoff Pleiss, Yu~Sun, and Kilian~Q. Weinberger.
\newblock {On Calibration of Modern Neural Networks}.
\newblock In Doina Precup and Yee~W. Teh, editors, \emph{Proceedings of the 34th International Conference on Machine Learning}, volume~70 of \emph{Proceedings of Machine Learning Research}, pages 1321--1330. PMLR, 2017.
\newblock URL \url{http://proceedings.mlr.press/v70/guo17a.html}.

\bibitem[Hoffmann et~al.(2022)Hoffmann, Borgeaud, Mensch, Buchatskaya, Cai, Rutherford, de~Las~Casas, Hendricks, Welbl, Clark, et~al.]{Hoffmann2022}
Jordan Hoffmann, Sebastian Borgeaud, Arthur Mensch, Elena Buchatskaya, Trevor Cai, Eliza Rutherford, Diego de~Las~Casas, Lisa~Anne Hendricks, Johannes Welbl, Aidan Clark, et~al.
\newblock Training compute-optimal large language models.
\newblock In \emph{Proceedings of the 36th International Conference on Neural Information Processing Systems}, pages 30016--30030, 2022.

\bibitem[Kaplan et~al.(2020)Kaplan, McCandlish, Henighan, Brown, Chess, Child, Gray, Radford, Wu, and Amodei]{Kaplan2020}
Jared Kaplan, Sam McCandlish, Tom Henighan, Tom~B Brown, Benjamin Chess, Rewon Child, Scott Gray, Alec Radford, Jeffrey Wu, and Dario Amodei.
\newblock Scaling laws for neural language models.
\newblock \emph{arXiv preprint arXiv:2001.08361}, 2020.

\bibitem[Kingma and Ba(2015)]{Kingma2015}
Diederik~P. Kingma and Jimmy Ba.
\newblock {Adam: A Method for Stochastic Optimization}.
\newblock In \emph{Proceedings of the 3rd International Conference on Learning Representations}, 2015.
\newblock URL \url{http://arxiv.org/abs/1412.6980}.

\bibitem[Kirsch(2024)]{Kirsch2024}
Andreas Kirsch.
\newblock {(Implicit) Ensembles of Ensembles: Epistemic Uncertainty Collapse in Large Models}.
\newblock In \emph{NeurIPS 2024 Workshop on Bayesian Decision-making and Uncertainty}, 2024.
\newblock URL \url{https://openreview.net/forum?id=hR3mhI2p2K}.

\bibitem[Kristiadi et~al.(2020)Kristiadi, Hein, and Hennig]{Kristiadi2020a}
Agustinus Kristiadi, Matthias Hein, and Philipp Hennig.
\newblock {Learnable Uncertainty under Laplace Approximations}.
\newblock abs/2010.02720, 2020.
\newblock URL \url{https://arxiv.org/abs/2010.02720}.

\bibitem[Lakshminarayanan et~al.(2017)Lakshminarayanan, Pritzel, and Blundell]{Lakshminarayanan2017}
Balaji Lakshminarayanan, Alexander Pritzel, and Charles Blundell.
\newblock {Simple and Scalable Predictive Uncertainty Estimation using Deep Ensembles}.
\newblock In I.~Guyon, U.~V. Luxburg, S.~Bengio, H.~Wallach, R.~Fergus, S.~Vishwanathan, and R.~Garnett, editors, \emph{Advances in Neural Information Processing Systems 30}, pages 6402--6413. Curran Associates, Inc., 2017.
\newblock URL \url{http://papers.nips.cc/paper/7219-simple-and-scalable- predictive-uncertainty-estimation-using-deep-ensembles.pdf}.

\bibitem[Lei(2017)]{Lei2017}
Jing Lei.
\newblock Fast exact conformalization of lasso using piecewise linear homotopy.
\newblock 2017.

\bibitem[Lei et~al.(2018)Lei, G'Sell, Rinaldo, Tibshirani, and and]{Lei2018}
Jing Lei, Max G'Sell, Alessandro Rinaldo, Ryan~J. Tibshirani, and Larry~Wasserman and.
\newblock {Distribution-Free Predictive Inference for Regression}.
\newblock \emph{Journal of the American Statistical Association}, 113\penalty0 (523):\penalty0 1094--1111, 2018.
\newblock \doi{10.1080/01621459.2017.1307116}.

\bibitem[Lin et~al.(2024)Lin, Trivedi, and Sun]{Lin2024}
Zhen Lin, Shubhendu Trivedi, and Jimeng Sun.
\newblock {Generating with Confidence: Uncertainty Quantification for Black-box Large Language Models}.
\newblock \emph{Transactions on Machine Learning Research}, 2024.
\newblock ISSN 2835-8856.
\newblock URL \url{https://openreview.net/forum?id=DWkJCSxKU5}.

\bibitem[Ling et~al.(2024)Ling, Zhao, Zhang, Cheng, Liu, Sun, Oishi, Osaki, Matsuda, Ji, Bai, Zhao, and Chen]{Ling2024}
Chen Ling, Xujiang Zhao, Xuchao Zhang, Wei Cheng, Yanchi Liu, Yiyou Sun, Mika Oishi, Takao Osaki, Katsushi Matsuda, Jie Ji, Guangji Bai, Liang Zhao, and Haifeng Chen.
\newblock Uncertainty quantification for in-context learning of large language models.
\newblock In Kevin Duh, Helena Gomez, and Steven Bethard, editors, \emph{Proceedings of the 2024 Conference of the North American Chapter of the Association for Computational Linguistics: Human Language Technologies (Volume 1: Long Papers)}, pages 3357--3370, Mexico City, Mexico, June 2024. Association for Computational Linguistics.
\newblock \doi{10.18653/v1/2024.naacl-long.184}.
\newblock URL \url{https://aclanthology.org/2024.naacl-long.184/}.

\bibitem[MacKay(1998)]{MacKay1998}
David J.~C. MacKay.
\newblock {Choice of Basis for Laplace Approximation}.
\newblock 33\penalty0 (1):\penalty0 77--86, 1998.
\newblock \doi{10.1023/A:1007558615313}.

\bibitem[Mackay(2003-06)]{MacKay03}
David J.~C. Mackay.
\newblock \emph{{Information Theory, Inference and Learning Algorithms}}.
\newblock Cambridge University Press, first edition edition, 2003-06.

\bibitem[Min et~al.(2022)Min, Lewis, Zettlemoyer, and Hajishirzi]{Min2022}
Sewon Min, Mike Lewis, Luke Zettlemoyer, and Hannaneh Hajishirzi.
\newblock {{M}eta{ICL}: Learning to Learn In Context}.
\newblock In Marine Carpuat, Marie-Catherine de~Marneffe, and Ivan~Vladimir Meza~Ruiz, editors, \emph{Proceedings of the 2022 Conference of the North American Chapter of the Association for Computational Linguistics: Human Language Technologies}, pages 2791--2809, Seattle, United States, July 2022. Association for Computational Linguistics.
\newblock \doi{10.18653/v1/2022.naacl-main.201}.
\newblock URL \url{https://aclanthology.org/2022.naacl-main.201/}.

\bibitem[Nakkiran et~al.(2020)Nakkiran, Kaplun, Bansal, Yang, Barak, and Sutskever]{Nakkiran2020}
Preetum Nakkiran, Gal Kaplun, Yamini Bansal, Tristan Yang, Boaz Barak, and Ilya Sutskever.
\newblock Deep double descent: Where bigger models and more data hurt.
\newblock In \emph{International Conference on Learning Representations}, 2020.
\newblock URL \url{https://openreview.net/forum?id=B1g5sA4twr}.

\bibitem[Neal(1994)]{Neal1994}
Radford~M. Neal.
\newblock \emph{Bayesian Learning for Neural Networks}.
\newblock PhD thesis, 1994.

\bibitem[Nocedal(1980)]{Nocedal1980}
Jorge Nocedal.
\newblock {Updating quasi-Newton matrices with limited storage}.
\newblock \emph{Mathematics of computation}, 35\penalty0 (151):\penalty0 773--782, 1980.

\bibitem[Papamarkou et~al.(2024)Papamarkou, Skoularidou, Palla, Aitchison, Arbel, Dunson, Filippone, Fortuin, Hennig, Hern\'{a}ndez-Lobato, Hubin, Immer, Karaletsos, Khan, Kristiadi, Li, Mandt, Nemeth, Osborne, Rudner, R\"{u}gamer, Teh, Welling, Wilson, and Zhang]{Papamarkou2024}
Theodore Papamarkou, Maria Skoularidou, Konstantina Palla, Laurence Aitchison, Julyan Arbel, David Dunson, Maurizio Filippone, Vincent Fortuin, Philipp Hennig, Jos\'{e}~Miguel Hern\'{a}ndez-Lobato, Aliaksandr Hubin, Alexander Immer, Theofanis Karaletsos, Mohammad~Emtiyaz Khan, Agustinus Kristiadi, Yingzhen Li, Stephan Mandt, Christopher Nemeth, Michael~A Osborne, Tim G.~J. Rudner, David R\"{u}gamer, Yee~Whye Teh, Max Welling, Andrew~Gordon Wilson, and Ruqi Zhang.
\newblock Position: {B}ayesian deep learning is needed in the age of large-scale {AI}.
\newblock In Ruslan Salakhutdinov, Zico Kolter, Katherine Heller, Adrian Weller, Nuria Oliver, Jonathan Scarlett, and Felix Berkenkamp, editors, \emph{Proceedings of the 41st International Conference on Machine Learning}, volume 235 of \emph{Proceedings of Machine Learning Research}, pages 39556--39586. PMLR, 21--27 Jul 2024.
\newblock URL \url{https://proceedings.mlr.press/v235/papamarkou24b.html}.

\bibitem[Prado et~al.(2021)Prado, Moral, and Parnell]{Prado2021}
Estev\~{a}o~B. Prado, Rafael~A. Moral, and Andrew~C. Parnell.
\newblock Bayesian additive regression trees with model trees.
\newblock \emph{Statistics and Computing}, 31\penalty0 (3), May 2021.
\newblock ISSN 0960-3174.
\newblock \doi{10.1007/s11222-021-09997-3}.
\newblock URL \url{https://doi.org/10.1007/s11222-021-09997-3}.

\bibitem[Rossi et~al.(2024)Rossi, Yuan, and Hannagan]{Rossi2024}
Simone Rossi, Rui Yuan, and Thomas Hannagan.
\newblock Understanding in-context learning in transformers.
\newblock In \emph{The Third Blogpost Track at ICLR 2024}, 2024.
\newblock URL \url{https://openreview.net/forum?id=lpyfAihk28}.

\bibitem[Shafer and Vovk(2008)]{Shafer2008}
Glenn Shafer and Vladimir Vovk.
\newblock A tutorial on conformal prediction.
\newblock \emph{Journal of Machine Learning Research}, 9\penalty0 (3), 2008.

\bibitem[Shorinwa et~al.(2024)Shorinwa, Mei, Lidard, Ren, and Majumdar]{Shorinwa2024}
Ola Shorinwa, Zhiting Mei, Justin Lidard, Allen~Z. Ren, and Anirudha Majumdar.
\newblock {A Survey on Uncertainty Quantification of Large Language Models: Taxonomy, Open Research Challenges, and Future Directions}, 2024.
\newblock URL \url{https://arxiv.org/abs/2412.05563}.

\bibitem[van Wieringen(2015)]{Wieringen2015}
Wessel~N. van Wieringen.
\newblock Lecture notes on ridge regression.
\newblock 2015.

\bibitem[Vaswani et~al.(2017)Vaswani, Shazeer, Parmar, Uszkoreit, Jones, Gomez, Kaiser, and Polosukhin]{Vaswani2017}
Ashish Vaswani, Noam Shazeer, Niki Parmar, Jakob Uszkoreit, Llion Jones, Aidan~N Gomez, {\L}ukasz Kaiser, and Illia Polosukhin.
\newblock Attention is all you need.
\newblock \emph{Advances in neural information processing systems}, 30, 2017.

\bibitem[Von~Oswald et~al.(2023)Von~Oswald, Niklasson, Randazzo, Sacramento, Mordvintsev, Zhmoginov, and Vladymyrov]{Oswald2023}
Johannes Von~Oswald, Eyvind Niklasson, Ettore Randazzo, Jo{\~a}o Sacramento, Alexander Mordvintsev, Andrey Zhmoginov, and Max Vladymyrov.
\newblock Transformers learn in-context by gradient descent.
\newblock In \emph{International Conference on Machine Learning}, pages 35151--35174. PMLR, 2023.

\bibitem[Vovk et~al.(2005)Vovk, Gammerman, and Shafer]{Vovk2005}
Vladimir Vovk, Alexander Gammerman, and Glenn Shafer.
\newblock \emph{Algorithmic learning in a random world}, volume~29.
\newblock Springer, 2005.

\bibitem[Yang et~al.(2023)Yang, Robeyns, Wang, and Aitchison]{Yang2023}
Adam~X. Yang, Maxime Robeyns, Xi~Wang, and Laurence Aitchison.
\newblock Bayesian low-rank adaptation for large language models.
\newblock 2023.

\bibitem[Ye et~al.(2024)Ye, Yang, Siah, and Namkoong]{Ye2024}
Naimeng Ye, Hanming Yang, Andrew Siah, and Hongseok Namkoong.
\newblock {Pre-training and in-context learning IS Bayesian inference a la De Finetti}.
\newblock 2024.

\bibitem[Yin et~al.(2023)Yin, Sun, Guo, Wu, Qiu, and Huang]{Yin2023}
Zhangyue Yin, Qiushi Sun, Qipeng Guo, Jiawen Wu, Xipeng Qiu, and Xuanjing Huang.
\newblock {Do Large Language Models Know What They Don{'}t Know?}
\newblock In Anna Rogers, Jordan Boyd-Graber, and Naoaki Okazaki, editors, \emph{Findings of the Association for Computational Linguistics: ACL 2023}, pages 8653--8665, Toronto, Canada, 2023. Association for Computational Linguistics.
\newblock \doi{10.18653/v1/2023.findings-acl.551}.
\newblock URL \url{https://aclanthology.org/2023.findings-acl.551}.

\end{thebibliography}
\end{document}